\def\eqref#1{equation~\ref{#1}}
\def\1{\bm{1}}
\DeclareMathAlphabet{\mathsfit}{\encodingdefault}{\sfdefault}{m}{sl}
\SetMathAlphabet{\mathsfit}{bold}{\encodingdefault}{\sfdefault}{bx}{n}
\def\gL{{\mathcal{L}}}
\def\gM{{\mathcal{M}}}
\def\gN{{\mathcal{N}}}
\def\gU{{\mathcal{U}}}
\def\gX{{\mathcal{X}}}
\def\gY{{\mathcal{Y}}}
\def\gZ{{\mathcal{Z}}}
\def\sR{{\mathbb{R}}}
\newcommand{\E}{\mathbb{E}}
\newcommand{\KL}{D_{\mathrm{KL}}}
\newtheorem{proposition}{Proposition}[section]
\newtheorem{lemma}{Lemma}[section]
\newenvironment{proof}{\paragraph{Proof:}}{\hfill$\square$}
\DeclareMathOperator*{\argmax}{arg\,max}
\title{Topological Obstructions and How to Avoid Them}
\author{%
  Babak Esmaeili\thanks{Equal contribution} \\
  Generative AI Group\\
  Eindhoven University of Technology \\
  \texttt{b.esmaeili@tue.nl} \\
  \And
  Robin Walters$^*$ \\
  Khoury College of Computer Sciences \\
  Northeastern University \\
  \texttt{r.walters@northeastern.edu} \\
  \AND
  Heiko Zimmermann \\
  Amsterdam Machine Learning Lab \\
  University of Amsterdam \\
  \texttt{h.zimmermann@uva.nl} \\
  \And
  Jan-Willem van de Meent \\
  Amsterdam Machine Learning Lab \\
  University of Amsterdam \\
  \texttt{j.w.vandemeent@uva.nl} \\
}
\begin{document}

\maketitle

\begin{abstract}
Incorporating geometric inductive biases into models can aid interpretability and generalization, but encoding to a specific geometric structure can be challenging due to the imposed topological constraints. In this paper, we theoretically and empirically characterize obstructions to training encoders with geometric latent spaces. We show that local optima can arise due to singularities (e.g.~self-intersection) or due to an incorrect degree or winding number. We then discuss how normalizing flows can potentially circumvent these obstructions by defining multimodal variational distributions. Inspired by this observation, we propose a new flow-based model that maps data points to multimodal distributions over geometric spaces and empirically evaluate our model on 2 domains. We observe improved stability during training and a higher chance of converging to a homeomorphic encoder.
\end{abstract}

\section{Introduction}
\label{sec:intro}

A well-established idea in machine learning research is that geometric inductive biases can help us learn representations that reflect the underlying structure of a dataset \citep{bronstein2021geometric,higgins2022symmetry}.
A key intuition behind this line of research is that such representations make it easier to reason about the similarities of different instances in the dataset, for example by relating inputs using a rotation or translation, which in turn aids interpretability and data-efficiency.
Geometric inductive biases have been explored in a wide variety of forms, including models that are defined on hyperbolic or spherical Riemannian manifolds \citep{lezcano2019cheap, ganea2018hyperbolic}, models that are equivariant or invariant with respect to particular symmetry groups
\citep{kondor2018generalization, cohen2016group}, and work that leverages symmetries to learn disentangled representations that factorize into distinct axes of variation \citep{higgins_towards_2018}.

In this paper, we consider symmetry-based approaches to learning representations in an  unsupervised manner by imposing geometric inductive biases on the representation space. 
In this context, our notion of a representation that ``reflects the underlying structure of the data'' is a representation that is \emph{homeomorphic} to the true generative factors. 
We are specifically interested in the optimization challenges that arise when encoding to geometric spaces such as Lie groups. 
While a common intuition is that an inductive bias that matches the underlying topology of the data will guide a model towards a homeomorphic representation, there are also indications that certain inductive biases can make a model more difficult to train in practice~\citep{park2022learning,falorsi_explorations_2018,batson2021topological}, which limits their practical utility. 


To understand why encoding to geometric structures can give rise to optimization challenges, we formalize topological defects that can occur in a randomly initialized encoder, such as discrepancies in the winding number or crossing number relative to those in a homeomorphic encoder. 
We show that these topological defects will be preserved under continuous optimization, which suggests that escaping these local optima relies on the discrete jumps that are employed during optimization. 

\begin{figure}[!t]
\begin{center}
\includegraphics[width=0.95\textwidth]{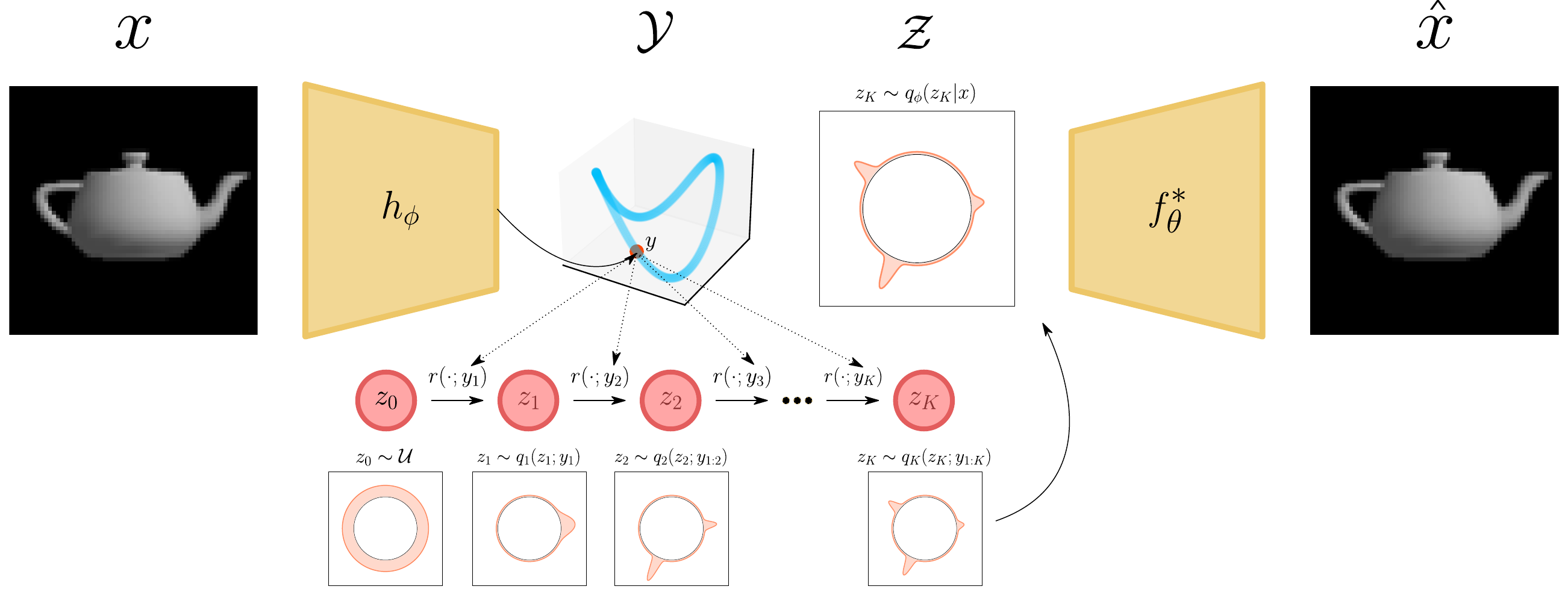}
\end{center}
\vspace{-1ex}
\caption{A GF-VAE consists of an encoder network $h_\phi$ that maps data $x$ to an intermediate parameter space $\gY$. The encoded vector $y$ is split into $K$ parts, where each sub-vector $y_k$ corresponds to the parameters of a normalizing flow at layer $k$. We can sample from the variational distribution $q_\phi$ by first sampling from a uniform prior defined on the Lie group ($ \mathrm{SO}(2)$ in this case), followed by a sequence of $K$ bijective transformations $r(\cdot; y_k)$. The output of the normalizing flows, denoted as $z_K$, is then passed to a decoder $f^{*}_{\theta}$ that maps from the Lie group $\gZ$ to the data space $\gX$.}
\label{fig:overview}
\end{figure}

To circumvent these obstacles to optimization, we propose Group-Flow Variational Autoencoders (GF-VAEs), which leverage normalizing flows to model complex multimodal distributions on Riemannian manifolds (Figure~\ref{fig:overview}). We show that if we define the mode of the variational distribution to be the representation, normalizing flows can circumvent some of the challenges associated with local optima due to their multimodal nature. Experiments demonstrate that GF-VAEs can escape local optima during the early stages of training, resulting in more reliable convergence to a homeomorphic mapping and a greater degree of continuity after training. 




We summarize the main contributions of this paper as follows: 
\begin{itemize}
    \item We characterize topological defects that can arise in encoders that map onto spaces with geometric structure. We show that some obstructions that arise from topological defects cannot be resolved using continuous optimization.
    \item We define evaluation criteria based on the winding number, crossing number, and continuity to measure topological defects and homeomorphism violations in the encoder.
    \item We propose GF-VAEs, a new VAE-based model that that employs normalizing flows to define complex distribution on Lie groups. We  empirically show that GF-VAEs are able to aid in circumventing identified optimization obstructions.
\end{itemize}




%
%

\section{Problem Statement}
\label{sec:problem}



\paragraph{Homeomorphic Encoders.} Our goal is to learn representations in domains where we have prior knowledge of the geometric structure, specifically structure in the form of a symmetry group that can be associated with the input data.
We assume that data lies on a low-dimensional manifold $\gM$ that is embedded in a higher-dimensional space $\gX := \mathbb{R}^n$ via a mapping $g_{\gX}: \gM \to \gX$. This is commonly known as the \emph{manifold hypothesis}~\citep{bengio2013representation}.
We denote the image of the mapping by $\gM_x := g_{\gX}(\gM) \subseteq \gX$. Then $g_{\gX}$ is a  homeomorphism, or topological isomorphism, onto its image ~$g \colon \gM \stackrel{\sim}{\longrightarrow} \gM_x$.  That is, $g$ is continuous, bijective, and has a continuous inverse. 

We wish to learn an encoder $f_\phi \colon \gX \to \gZ$ such that its restriction $f_\phi|_{\gM_x} \colon \gM_x \to \gZ$ is a homeomorphism. Following \citet{falorsi_explorations_2018}, we will define this mapping in terms of a network $h_\phi \colon \gX \to \gY$ that maps to an intermediate space $\gY := \mathbb{R}^d$, followed by a known projection $\pi: \gY \to \gZ$,
\begin{equation}\label{eqn:fphi}
    f_{\phi} \colon \gX  \xrightarrow[]{h_{\phi}} \gY \xrightarrow[]{\pi} \gZ.
\end{equation}

\paragraph{Lie Groups.} 
Our work focuses the specific case where the manifolds $\gM$ and $\gZ$ are Lie groups.
Any set of symmetries may be formally described by a \emph{group} $G$, which is a set of invertible transformations which may be composed using a binary operation $\cdot: G \times G \rightarrow G$. A \emph{Lie group} is a group that is also a differentiable manifold. Lie groups describe continuous symmetries such as rotations and translations, and are therefore a natural mathematical setting for any system with spatio-temporal symmetries.

Reasoning about Lie groups is difficult due to  their non-flat structure. \emph{Lie algebras} provide a way to study Lie groups by considering the tangent space $\mathfrak{g}$ of the manifold at the identity element. The exponential map $\exp : \mathfrak{g} \rightarrow G$ maps points the Lie algebra to points on the group manifold as such.  A vector $v \in \mathfrak{g}$ defines a vector field on $G$ using the group action to transport $v$ around $G$.  Then $\exp(v)$ is defined to be the point reached by flowing along this vector field for unit time. For more details on Lie groups, we refer the readers to~\citep{hall2013lie}.

\paragraph{Variational Autoencoders on Lie Groups.} 
In this paper, we primarily focus on variational autoencoder (VAE)~\citep{kingma2014,rezende2014stochastic} as a means to learn a homeomorphic embedding.
In this setting, we define a generative model by composing a uniform prior $p(z) = \gU(z)$ on $\gZ$ with a likelihood model $p_\theta(x|z) = \gN(x; f^{*}_\theta(z) ,\sigma_x^2 I_n)$ that is defined in terms of a decoder network $f^{*}_\theta : \gZ \to \gX$. We use the encoder $f_\phi$ to define a variational distribution $q_\phi(z|x)$, whose design we discuss below and train the encoder and decoder jointly by optimizing the variational lower bound,
\begin{equation}
\label{eq:elbo}
    \gL_{\phi,\theta}^{\text{VAE}}(x) 
    =
    \E_{q_\phi(x|z)}
    \left[
        \log p_{\theta}(x|z)
    \right]
    -
    \KL\left[q_{\phi}(z|x) \| p(z)\right] 
    \leq
    \log p_\theta(x).
\end{equation}
Defining a variational distribution on a manifold is generally not straightforward as it requires finding an expression of the density on the manifold or keeping track of the change of volume when projecting to the manifold from the tangent space. \citet{falorsi2019reparameterizing} define a reparameterized construction for sampling from a Gaussian distribution on $\gZ$ by first sampling $\epsilon \sim \gN(0, I_p)$ from the $p$-dimensional Lie algebra associated with $\gZ$, then rescaling $\epsilon$ by way of element-wise multiplication $\sigma_\phi(x) \odot \epsilon$ using a network $\sigma_\phi: \gX \to \mathbb{R}^p$, and computing the corresponding element $z_\epsilon$ on the group manifold using the exponential map. By left multiplying this randomly sampled $z_\epsilon$ by the group element $f_\phi(x) \in \gM$ to \emph{move} the mode of the final distribution to its intended location, we obtain the construction
\begin{align}
    \label{eq:reparam-sampling-lie-group}    
    \epsilon \sim \gN(\cdot; 0, I_p)
    ,&&
    z_{\epsilon} = \exp (\sigma_\phi(x) \odot \epsilon)
    ,&&
    z = f_\phi(x)  \cdot z_{\epsilon}
    ,&&
    p(z_\epsilon) = p(\epsilon) \left| \det \frac{\partial \exp(\epsilon)}{\partial \epsilon} \right|^{-1}
    .
\end{align}
We account for the possible change in volume using the change of variable formula.
Note that composing group elements $f_\phi(x)  \cdot z_{\epsilon}$ does not change the density of the resulting point on the manifold because the \emph{Haar} measure, a standard choice for Lie groups, is left invariant.

\paragraph{Running Example.} As a concrete running example, which we will use throughout this paper, we will consider data on the circle $\gM = \mathrm{SO}(2)$ that is embedded into a space of images $\gX$. 
The subset $\gM_x$ of images 
generated by a function $g \colon \mathrm{SO}(2) \to \gM_x$ corresponds to images of an object that is subject to a one-dimensional rotation. We will consider the case of in-plane rotations of images, as well images of a 3-dimensional object that is rotated around a single axis.

The special orthogonal Lie group $\mathrm{SO}(2)$ is defined as  
\begin{equation*}
    \mathrm{SO}(2) := \{ z \ | \ z \in \mathrm{GL}(2),  z^Tz = I,  \det(z) = 1\}
    = \biggl\{
    A(y) :=
    \begin{bmatrix} 
        y_1  &  -y_2 \\
        y_2 & y_1 
    \end{bmatrix} 
    \ | \ 
    y \in \mathbb{R}^2,
    \lVert y \rVert = 1
    \biggr\},
\end{equation*}
where $\mathrm{GL}(2)$ is the general linear group, which is the group of invertible $2 \times 2$ matrices under matrix multiplication. 
The set of images $\gM_x$ lives on a manifold that is homeomorphic to the rotation group $\mathrm{SO}(2)$, embedded into the space of images by $g$.
Because we assume we know the underlying manifold $\gM$, we can design $\gZ$ to have the same structure. When $\gM = \mathrm{SO}(2)$, we can use $\gY := \sR^2$. The function $h_\phi$ in Equation~\ref{eqn:fphi} is then an ordinary neural network, and the projection to $\gZ$ is simply the projection onto the unit circle $\colon \sR^2 \to \mathrm{SO}(2) := y \mapsto A(y/\|y\|)$.

\begin{figure}[!t]
    \centering
    \includegraphics[width=0.99\textwidth]{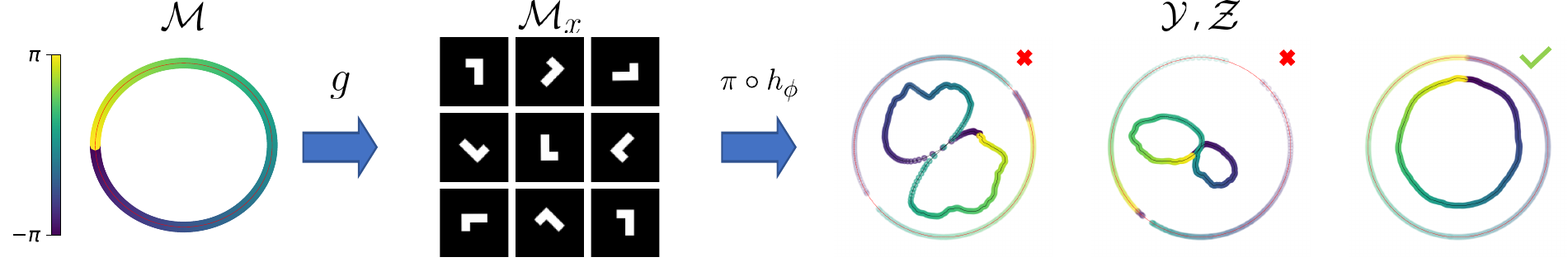}
    \caption{Example of topological defects in learned encoders for a VAE with data on $\gX = \sR^{32 x 32}$ in the form of rotations L-shaped tetrominoes and $\gZ=\mathrm{SO}(2)$ (the unit circle). Our goal is to learn an encoder $f_\phi$ that defines a homeomorphism (a continuous bijection with continous inverse) between the manifold of images of L-shaped tetrominoes $\gM_x \subset \gX$ and that of the latent space $\gZ=\mathrm{SO}(2)$. The encoder $f_\phi = \pi \circ h_\phi$ is defined by composing a network $h_\phi:\gX \to \gY$ with a projection $\pi:\gY \to \gZ$. On the right, we show the intermediate space $\gY_x = h_\phi(\gM_x)$ and its projection $\gZ_x = \pi(\gY_x)$ for 3 random seeds after convergence. Colours indicate the angle associated with each data point on the manifold $\gM$. Optimization obstructions can arise when the network $h_\phi$ maps data onto a trajectory $\gY_x$ that exhibits topological defects, such as the crossing in a ``figure 8'' shape, which gives rise to discontinuities in the projection $\gZ_x$ onto the latent space.
    }
    \label{fig:so2-failed-cases}
\end{figure}


\vspace{-0.5\baselineskip}
\section{Optimization  Obstructions}
\label{sec:theory}

In practice, training homeomorphic encoders can give rise to optimization challenges. To develop intuition for these challenges, we will consider the running example $\gM=\mathrm{SO}(2)$ with data in the form of in-plane rotated images on $\gX = \sR^{32 \times 32}$ (Figure~\ref{fig:so2-failed-cases}). We train a VAE in which the network $h_\phi$ is a standard convolutional network which is paired with a deconvolutional decoder $f^{*}: \mathrm{SO}(2) \to \sR^{32\times 32}$ (see Appendix~\ref{app:sec:exp}). 
Figure~\ref{fig:so2-failed-cases} shows encodings in the intermediate space $\gY$ after training with ELBO  with 3 random seeds. For the first two seeds, we see that $h_\phi(\gM_{x})$ crosses over itself, resulting in an figure ``8'' shape. When projected onto $\gZ := \mathrm{SO}(2)$, this results in discontinuities at the crossover points. The second seed in addition also exhibits a sparse region in the intermediate space, leading to a gap in the $\mathrm{SO}(2)$ projection. Only the third initialization has converged to the correct topology. These local optima are not unique to this example; obstructions have also been encountered in \cite{falorsi_explorations_2018} when trying to learn 3D orientations of a rotating multi-color cube from 2D images using a homeomorphic VAE. \citet{park2022learning} show that the homeomorphic VAE cannot generalize well to other shapes and tends to learn a degenerate embedding to a small part of $\mathrm{SO}(3)$. 

The main observation that we make in this paper is that imposing a geometric structure on the latent space can introduce topological obstructions \emph{during optimization}. This insight is distinct from the homological obstructions identified by \citet{de_haan_topological_2018}, who describe the obstructions that emerge from the choice of parameterization on the Lie group. We will refer to the topological defects that we identify in this paper as ``optimization obstructions''.  The problem that we identify here is that randomly-initialized layers have a high probability of exhibiting topological defects (degree, crossing, etc.) that cannot be resolved under continuous optimization using gradient flow. Removal of these topological defects is thus only possible by relying on the jumps coming from performing SGD with a large enough learning rate. This implies that while escaping such local minima is possible, it is difficult and may require many epochs to do so, dramatically slowing training and undercutting the advantages of learning a homeomorphic representation.  

We now discuss several specific optimization obstructions.
We focus on the case where $\gM$ is the Lie group $\mathrm{SO}(2)$, with the same example setting described in Section~\ref{sec:problem}. All the optimization obstructions we consider occur in this case and in the case of higher-dimensional manifolds $\gM$ as well, but are simpler to describe for $S^1$.  
See Appendix \ref{app:sec:proof} for proofs.

\subsection{Figure Eight Local Minima}
\label{subsec:fig-eight}

To more precisely describe obstructions that might arise during optimization, we consider continuous-time training along a gradient flow. We denote the weights of the initialized encoder as $\phi(0)$ and the trained weights as $\phi(1)$.  
We consider the idealized setting in which $\phi(t)$ is a continuous function of $t$. 

Empirically, either at initialization or after some training, we often observe a ``figure 8'' pattern in $\gY$ of roughly the form $(h^\infty \circ g)(\theta) = (\cos \theta \sin \theta, \sin \theta)^\top$. In such a case, the embedding into $\gZ$ contains two disjoint pieces $[-3\pi/4,-\pi/4] \cup [\pi/4,3\pi/4]$.
One half of the circle is mapped to one piece at the top of the circle and the other half of the circle is mapped to the other disjoint piece at the bottom of circle. Such a mapping takes advantage of the singularity at $(0,0)$\footnote{In practice, we never exactly cross $(0,0)$ but only pass by it with a small distance.} into order to embed $S^1$ in two continuous pieces.  The resulting mapping is \emph{nearly} bijective, failing to reconstruct on only a small region near the two discontinuities.  
It is also mostly continuous, having only two discontinuities.  

Once this local minimum is obtained, it is very difficult to move out of it using gradient descent.  It is unlikely for the two pieces to join together and become a homeomorpic embedding since this would require passing one disjoint segment through the other and reversing its orientation which would violate bijectivity and increase the reconstruction loss.  

\begin{wrapfigure}{r}{0.4\textwidth}
    \vspace{-1.5em}
    \centering
    \includegraphics[width=0.335\textwidth]{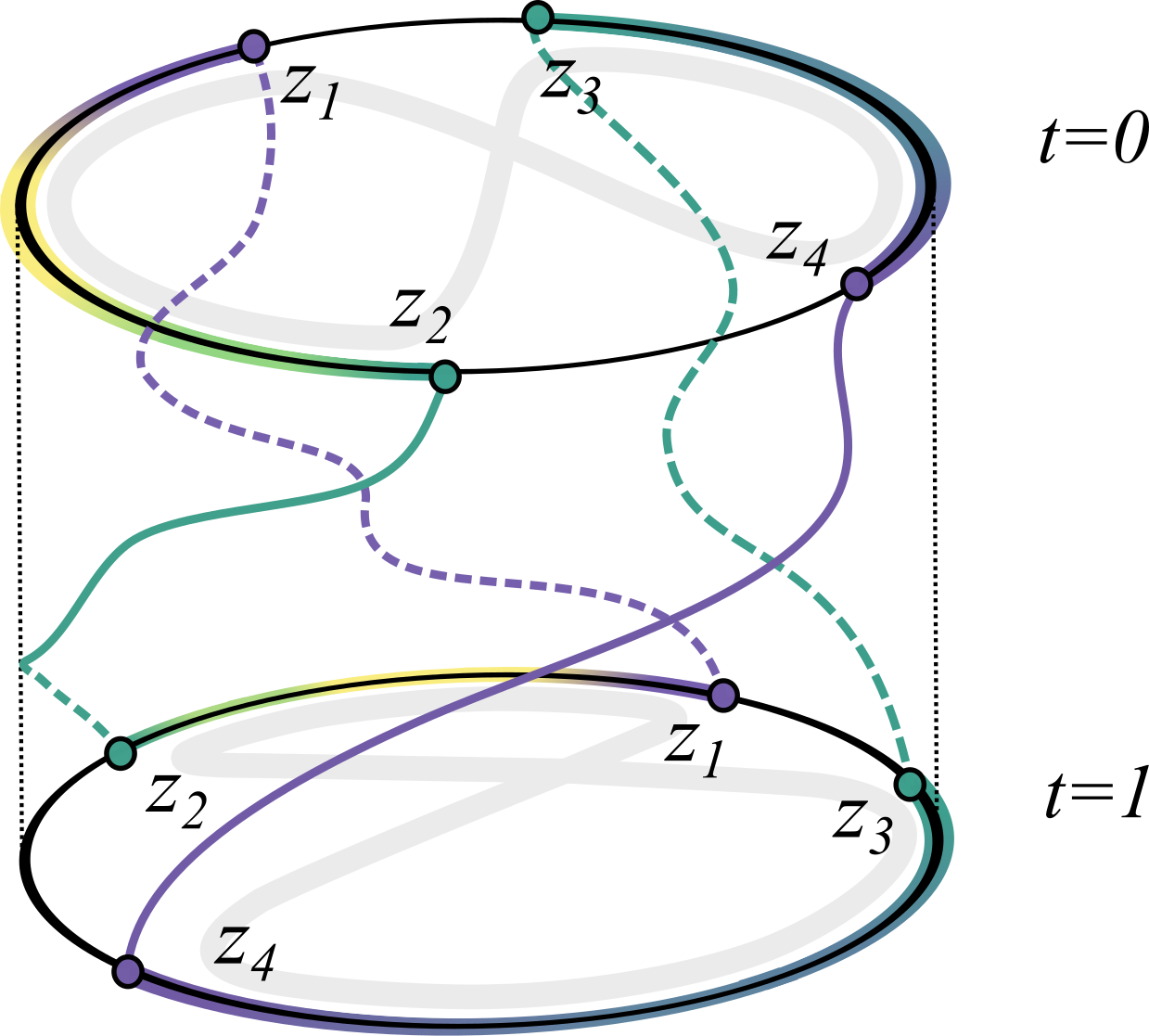}
    \caption{The figure 8 pattern in $\gY$ (grey) maps to two disconnected components in $\gZ$.  The cyclic order of these 4 endpoints is preserved by homotopy.  Following the parameterization of the data manifold the cyclic order is $(z_1,z_2,z_4,z_3)$, which is distinct from the cyclic order of a homeomorphic embedding, either $(z_1,z_2,z_3,z_4)$ or $(z_4,z_3,z_2,z_1)$.}
    \label{fig:Homotopy}
    \vspace{-2.0em}
\end{wrapfigure}

We make this observation precise by noting that continuous optimization preserves the ordering of points on the circle. Let $(z_1,z_2,z_3,z_4)$ denote the four end points of the two disjoint intervals of $f_{\phi(0)}$. 
The ordering of these points is only defined up to cyclic permutations $\mathrm{mod}\ C_4$.
Proposition~\ref{prop:fig8} states that continuous optimization must preserve ordering $\mathrm{mod}\ C_4$. Figure \ref{fig:Homotopy} illustrates the proof.

\begin{proposition}
\label{prop:fig8}  
Assume that $f_{\phi(t)}$ undergoes continuous optimization.
Assume that $f_{\phi(t)} \circ g$ is injective for all $t$.  The cyclic ordering induced on $k$ points by $f_{\phi(0)}$ is equal to $f_{\phi(1)}$. Thus a figure 8 embedding, which corresponds to cyclic order $(z_1,z_2,z_4,z_3)\ \mathrm{mod}\ C_4$, cannot be transformed to a homeomorphic embedding, which has cyclic order $(z_1,z_2,z_3,z_4)\ \mathrm{mod}\ C_4$ or $(z_4,z_3,z_2,z_1)\ \mathrm{mod}\ C_4$.
\end{proposition}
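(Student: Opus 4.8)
The plan is to recast the statement as a continuity argument in the space of $k$ distinct points on the circle $\gZ \cong S^1$. Fix any $k$ distinct points $p_1,\dots,p_k$ on the data manifold $\gM_x$; in the figure-8 application these will be the preimages under $f_{\phi(0)}$ of the four arc endpoints $z_1,\dots,z_4$. I would then study the path
\[
  t \;\longmapsto\; \big(f_{\phi(t)}(p_1),\dots,f_{\phi(t)}(p_k)\big)\in (S^1)^k .
\]
By the injectivity hypothesis on $f_{\phi(t)}\circ g$, the $k$ coordinates are pairwise distinct for every $t$, so this path lands in the ordered configuration space $\mathrm{Conf}_k(S^1)$, on which the cyclic order of the $k$ labelled points is defined. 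Continuity of $t\mapsto\phi(t)$ together with continuity of the encoder $f_\phi$ in its weights makes the path continuous. The point to be careful about --- and, I expect, the only genuinely delicate part of the argument --- is that we should use \emph{only} continuity in the optimization parameter $t$ and pointwise injectivity in the circle variable, and must \emph{not} invoke continuity of $f_{\phi(t)}\circ g$ as a self-map of $S^1$: at a figure-8 configuration that map has jump discontinuities where the trajectory passes close to the singular point $(0,0)$ of $\pi$, so an argument that silently assumed such continuity would be invalid precisely in the case we care about. (Implicit in ``continuous optimization'' here is that each $f_{\phi(t)}$ is well-defined on $\gM_x$, i.e.\ $h_{\phi(t)}$ avoids the singular locus of $\pi$.)

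The engine of the proof is a local-constancy lemma: the cyclic order is a locally constant function on $\mathrm{Conf}_k(S^1)$. I would prove this by an elementary neighbourhood estimate --- at a configuration $(w_1,\dots,w_k)$ of pairwise distinct points, choose $\delta>0$ so small that the closed $\delta$-arcs around the $w_i$ are pairwise disjoint; any configuration in which each point lies within $\delta$ of the corresponding $w_i$ then places each point inside its own arc, and since these disjoint arcs sit in a fixed cyclic order on $S^1$, the perturbed configuration has the same cyclic order. Combined with continuity of the path, this gives that $t\mapsto(\text{cyclic order of }f_{\phi(t)}(p_1),\dots,f_{\phi(t)}(p_k))$ is locally constant on the interval $[0,1]$, hence constant; in particular $f_{\phi(0)}$ and $f_{\phi(1)}$ induce the same cyclic order on $p_1,\dots,p_k$, which is the general claim.

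For the figure-8 specialization I would carry out the short computation that pins down the two cyclic orders being compared. Following the running-example parameterization $(h^{\infty}\circ g)(\theta)=(\cos\theta\sin\theta,\sin\theta)^\top$ and composing with the projection $\pi$ onto $\mathrm{SO}(2)$, the upper lobe $\theta\in(0,\pi)$ sweeps the arc $[\pi/4,3\pi/4]$ and the lower lobe $\theta\in(\pi,2\pi)$ sweeps $[-3\pi/4,-\pi/4]$, so following the data-manifold orientation the four endpoints are visited in cyclic order $(z_1,z_2,z_4,z_3)\ \mathrm{mod}\ C_4$. If instead $f_{\phi(1)}\circ g$ were a homeomorphism $S^1\to S^1$, it would be orientation-preserving or orientation-reversing and hence would carry the source cyclic order $(p_1,p_2,p_3,p_4)$ to $(z_1,z_2,z_3,z_4)$ or to $(z_4,z_3,z_2,z_1)$. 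Checking the $C_4$-orbits (there are $(k-1)!=6$ cyclic orders of four labelled points) shows $(z_1,z_2,z_4,z_3)$ equals neither, contradicting the constancy established above; therefore no continuous path of weights along which $f_{\phi(t)}\circ g$ remains injective can join a figure-8 embedding to a homeomorphic one. Beyond the continuity subtlety flagged above, the remaining work is just this bookkeeping together with the routine fact that a neural-network encoder depends continuously on its weights.
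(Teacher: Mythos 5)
Your proof is correct, and it follows the same overall strategy as the paper: track the images of $k$ fixed points as a continuous path in the configuration space $\mathrm{Conf}_k(S^1)$ (using injectivity to stay off the diagonal, and continuity only in the optimization parameter $t$, not in the circle variable), and argue that the cyclic order cannot change along such a path. Where you differ is in the key step. The paper establishes the invariance by classifying the path components of the configuration space: it uses the homeomorphism $\mathrm{Conf}_k(S^1)\cong \mathrm{SO}(2)\times\mathrm{Conf}_{k-1}(\mathbb{R})$, identifies $\pi_0(\mathrm{Conf}_{k-1}(\mathbb{R}))\cong S_{k-1}$ via a fundamental-domain decomposition and the intermediate value theorem, and concludes that components of $\mathrm{Conf}_k(S^1)$ are exactly the cyclic orderings. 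You instead prove directly that the cyclic-order function is locally constant on $\mathrm{Conf}_k(S^1)$ via the disjoint-$\delta$-arc estimate, and invoke connectedness of $[0,1]$. Your route is more elementary and proves exactly what the proposition needs (cyclic order is preserved along the path); the paper's route is slightly heavier but yields the stronger ``if and only if'' statement that two configurations are homotopic precisely when they share a cyclic order. Your explicit caveats --- that one must not assume continuity of $f_{\phi(t)}\circ g$ as a self-map of $S^1$ (it is discontinuous at a figure-8), and that continuity of the path in $t$ tacitly requires $h_{\phi(t)}$ to avoid the singular locus of $\pi$ on the chosen points --- are genuine hypotheses that the paper uses implicitly, so flagging them is a point in your favour. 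The concluding bookkeeping (the figure-8 endpoints appear in cyclic order $(z_1,z_2,z_4,z_3)$, while a homeomorphism, being orientation-preserving or -reversing, forces $(z_1,z_2,z_3,z_4)$ or $(z_4,z_3,z_2,z_1)$) matches the paper.
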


In other words, transition from a figure 8 embedding to a homeomorphic embedding is impossible without violating continuity during optimization. This indicates that escaping a figure 8 local optimum during training would need to rely on discrete jumps and likely the stochasticity of the gradient estimate.

\subsection{Degree Obstructions}

A second class of topological defects that can arise are encodings with a discrepancy in the winding number. We can compute the degree, or winding number, of a map $\psi: S^1 \to S^1$ around the origin by summing up the differentials along its path on the sphere,
\begin{equation*}
    \mathrm{\omega}(\psi) = \frac{1}{2\pi} \int_{S^1} d\psi(\theta).
\end{equation*}
This concept can be extended to arbitrary connected oriented manifolds, where it is usually referred to as the degree of a mapping. Intuitively, it describes the number of times that the \emph{domain manifold} wraps around the \emph{co-domain manifold}. 

If the embedded image $h_{\phi(t)}(\gM_x)$ does not contain the origin, then the mapping factors through $\mathbb{R}^2 \setminus \lbrace (0,0) \rbrace$ and is consequently continuous. Therefore, a continuous path in $\phi$  yields a continuous path in $\gZ$.
If $f_\phi|_{\gM_x}$ is continuous, then the function $\psi_\phi := \pi \circ h_\phi \circ g: S^1 \to S^1$ has a well-defined degree, also known as winding number $w(\psi_\phi) \in \mathbb{Z}$.  In order for $f_\phi|_{\gM_x}$ to be a homeomorphism, the winding number must be $w(\psi_\phi) \in \lbrace -1, 1 \rbrace$.  Under random initialization, however, the initial network may have winding number equal to any integer. Assuming continuous optimization and continuous embeddings, 
then $h_{\phi(t)}(x)$ is a continuous function of both $x,t$. 
We assume that $h_{\phi(t)}(x) \not = (0,0)$ for any $t,x$ and thus winding number is defined for any time.
The following proposition thus holds.
\begin{proposition}
\label{prop:winding}
Winding numbers of the initialized and final model are equal $w(\psi_{\phi(0)}) =  w(\psi_{\phi(1)})$.
\end{proposition}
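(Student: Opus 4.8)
The plan is to show that the winding number is a homotopy invariant and then apply this to the continuous family $\psi_{\phi(t)}$. First I would observe that under the stated assumptions, $H(x,t) := h_{\phi(t)}(x)$ is a continuous map $\gM_x \times [0,1] \to \sR^2$, and since we assume $H(x,t) \neq (0,0)$ everywhere, $H$ in fact lands in $\sR^2 \setminus \{(0,0)\}$. Precomposing with $g : S^1 \to \gM_x$ and postcomposing with the projection $\pi : \sR^2 \setminus \{(0,0)\} \to S^1$ (radial normalization, which is continuous and well-defined precisely because the origin is avoided), we obtain a homotopy $\Psi : S^1 \times [0,1] \to S^1$ with $\Psi(\cdot, 0) = \psi_{\phi(0)}$ and $\Psi(\cdot, 1) = \psi_{\phi(1)}$.

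Next I would invoke the standard fact that the winding number (degree of a self-map of $S^1$) is invariant under homotopy. The cleanest way to see this is via the identification of homotopy classes of maps $S^1 \to S^1$ with $\pi_1(S^1) \cong \mathbb{Z}$, where the integer associated to a loop is exactly its winding number / degree. Alternatively, one can argue analytically: lift $\Psi$ to a continuous map $\tilde\Psi : \sR \times [0,1] \to \sR$ with $\Psi(e^{i\theta},t) = e^{i\tilde\Psi(\theta,t)}$ (possible by the path/homotopy lifting property of the covering $\sR \to S^1$), note that $\tilde\Psi(\theta + 2\pi, t) - \tilde\Psi(\theta, t)$ is a continuous integer-valued function of $t$, hence constant, and this constant divided by $2\pi$ is $w(\psi_{\phi(t)})$. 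Either way we conclude $w(\psi_{\phi(0)}) = w(\psi_{\phi(1)})$.

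The main obstacle — really the only nontrivial point — is ensuring the homotopy genuinely stays in $S^1$ (equivalently, that $H$ avoids the origin), so that $\pi \circ H$ is defined throughout; this is exactly the hypothesis $h_{\phi(t)}(x) \neq (0,0)$, which the excerpt has already granted, together with the continuity of $t \mapsto \phi(t)$ and the continuity of the network in its weights. Everything else is the textbook homotopy invariance of degree, so the proof is short once the setup is in place. I would present the lifting argument explicitly since it is self-contained and makes the "integer-valued continuous function is constant" step transparent.
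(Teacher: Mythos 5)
Your proof is correct and follows essentially the same route as the paper: the paper simply observes that $t \mapsto w(\psi_{\phi(t)})$ is continuous and integer-valued, hence constant, which is exactly the conclusion of your lifting argument. Your version just makes explicit the homotopy $\Psi$ and the covering-space justification that the paper leaves implicit.
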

In practice, neither the continuous optimization assumption nor the avoidance of the origin holds.  Rather $h_\phi$ is updated by SGD in discrete jumps and $h_\phi \circ g$ may map to the origin.  Thus, empirically, we do see that the winding number may change during training.  However, if the initialization avoids the origin, then due to the tendency of the magnitude of the unnormalized embeddings $h_\phi(x)$ to grow during optimization  (see Section \ref{subsec:maggrowth}), the winding number changing becomes more unlikely. This means that defects in the winding number pose significant obstruction to learning homeomorphic embeddings. The winding number is also the primary optimization obstruction which makes it impractical to remove the hard projection $\pi$.  If instead we decode directly from $y \in \gY$ but push embeddings to the unit circle using the loss $|\|y\|-1|$, then it is far more likely we converge to discontinuous embeddings with the incorrect winding number (Appendix \ref{app:decodefromy}). 

\subsection{Magnitude Growth in $\gY$}
\label{subsec:maggrowth}

Empirically, we observe the values of the embeddings in $\gY := \mathbb{R}^2$ continually grow during training.  This phenomenon makes it more difficult for the embedded data manifold $h_\phi(\gM_x)$ to cross the origin and for the winding number to change.  We give a theoretical explanation for this behavior.

Consider what would happen if the embedding $y$ were updated directly based on the gradient of the loss $\nabla_y \gL$ with respect to $y$.  We assume the loss depends only on $z = y/\|y\|$,
and so has level sets which are unions of radial rays from the origin. 
The gradient $\nabla_y \gL$ must then be tangent to a circle about the origin.
That is, for $y = (a,b)$, the gradient $\nabla_y \gL = ( \pm b,\mp a)$.  Under gradient flow, the evolution of $y$ in time $y_t$ would thus flow along circles of fixed radius and so
$\| y_0 \| = \| y_t \|$.  
Under gradient descent, however, due to the convexity of the flow lines, which are circular, the embeddings $y$ will tend to grow in magnitude.  For $\eta \in \mathbb{R}_{>0}$, we compute
\begin{align*}
    \| y - \eta \nabla_y \gL \|^2 = (a \mp \eta b)^2 + (b \pm \eta a)^2  
    = (a^2 + b^2)(1 + \eta^2) 
    > \|y \|^2.
\end{align*}

In practice, however, we do not update $y$ based on $\nabla_y \gL$ but rather based on the gradient with respect to model parameters $\phi$.  Let $F \colon \Phi \to \gY$ be the map from model parameters $\phi$ to $y$ given fixed input data $x$.  Then the actual update to $y$ is $\tilde{\nabla}_y \gL = dF^T \circ \nabla_\phi \gL$ where $dF$ is the total derivative or Jacobian of the map $F$.  Since $\nabla_\phi \gL = (dF) \nabla_y \gL$ we have $\tilde{\nabla}_y \gL = dF^T dF \nabla_y \gL$.  The angle between $\tilde{\nabla}_y \gL$ and $\nabla_y \gL$ is bounded by some $\theta$ a quantity depending on the eigenvalues of the operator $dF$.  Given that $\nabla_y \gL$ is tangential to the circle, assuming for simplicity $\tilde{\nabla}_y \gL$ has constant length $L$ and uniform distribution $[-\theta,\theta]$ in angle to $\nabla_y \gL$, the norm of $y$ still grows \emph{in expectation}. 

\begin{proposition}
Assume a circle of radius $R$.  Let $v$ be a random vector at $y$ on the circle of length $L$ with angle to the tangent uniform in $[-\theta,\theta]$.  Then
\[
\mathbb{E}[\|y + v\|^2] = L^2+R^2 > R^2 = \| y \|^2.
\]
\end{proposition}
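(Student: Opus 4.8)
The plan is to expand the squared norm and then use the symmetry of the angular distribution to eliminate the cross term; this is essentially a one-line computation once coordinates are set up correctly. First I would fix an orthonormal frame adapted to the base point $y$: let $\hat r = y/\|y\|$ be the outward radial unit vector and $\hat t$ a unit tangent vector to the circle at $y$, so that $\langle \hat r, \hat t\rangle = 0$ and $\|y\| = R$. By the hypothesis, a vector $v$ of length $L$ making angle $\alpha$ with the tangent decomposes in this frame as $v = L\cos\alpha\,\hat t + L\sin\alpha\,\hat r$, with $\alpha \sim \mathrm{Unif}[-\theta,\theta]$.

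Next I would compute, for fixed $\alpha$, $\|y+v\|^2 = \|y\|^2 + 2\langle y, v\rangle + \|v\|^2$. Since $y = R\,\hat r$ and $\hat r \perp \hat t$, only the radial part of $v$ contributes to the inner product, giving $\langle y, v\rangle = RL\sin\alpha$, while $\|v\|^2 = L^2$. Hence $\|y+v\|^2 = R^2 + 2RL\sin\alpha + L^2$ pointwise in $\alpha$.

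Finally I would take the expectation over $\alpha$. Because the interval $[-\theta,\theta]$ is symmetric about $0$ and $\sin$ is odd, $\E[\sin\alpha] = 0$, so the cross term drops out and $\E[\|y+v\|^2] = R^2 + L^2$. As $L > 0$, this is strictly larger than $R^2 = \|y\|^2$, which is exactly the assertion; together with the informal discussion in Section~\ref{subsec:maggrowth} this formalizes the claim that the embedding norm grows in expectation under a gradient step whose direction is tangential up to a bounded angular perturbation.

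There is essentially no obstacle here: the only point requiring care is the coordinate bookkeeping in the decomposition of $v$ (identifying which component is radial and which is tangential). Even this is not delicate, since whatever sign convention one adopts for $\alpha$, the cross term is an odd function of $\alpha$ and therefore integrates to zero over the symmetric interval; indeed the same argument shows the conclusion holds for \emph{any} distribution of $\alpha$ that is symmetric about the tangent direction, not merely the uniform one.
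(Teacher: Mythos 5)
Your proof is correct and follows essentially the same route as the paper's: both place $y$ along the radial direction, write $v$ in a tangent/radial frame as $(L\cos t, L\sin t)$, expand $\|y+v\|^2$, and kill the cross term by oddness of $\sin$ over the symmetric interval, with the Pythagorean identity giving $L^2 + R^2$. Your closing remark that any angular distribution symmetric about the tangent suffices is a small but accurate generalization; otherwise the arguments coincide.
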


\section{GroupFlow-VAE}
\label{sec:gf-vae}

\begin{figure}[!t]
\begin{center}
\includegraphics[width=0.95\textwidth]{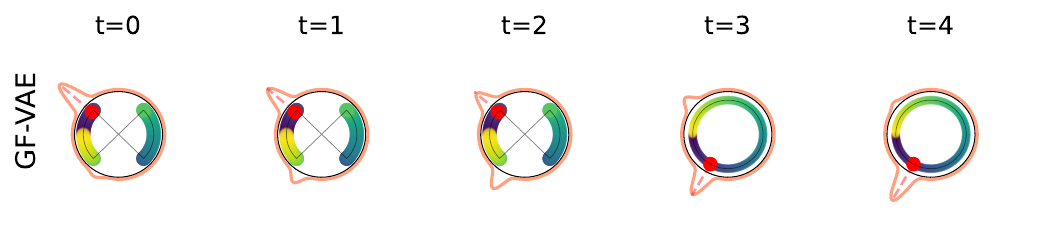}
\end{center}
\vspace*{-3.0ex}
\caption{
The initialized encoder (t=0) may contain defects as described in Section~\ref{sec:theory}, such as a figure 8 pattern.
With a standard VAE, where the learned representation (red point) is the mean of a conditional gaussian, these defects are unlikely to be resolved during optimization, as it would require passing through intermediate parameters (and hence representations) with higher reconstruction loss. Using a multimodal variational distibution, the parameters and corresponding representations are less tightly coupled, in the sense that a continuous change in the parameter space can result in a discontinues change in representation (i.e. the mode of the distribution) without passing through \emph{high-loss} areas of the parameters space.
}

\label{fig:untwist}
\end{figure}

The topological obstructions to optimization that we identified in Section~\ref{sec:theory} cannot be resolved under continuous optimization. Moreover, even if we allow for a degree of discontinuity, resolving obstructions like a figure 8 will require violating bijectivity, which implies that the reconstruction loss must increase to escape this defect.
This suggests that a VAE with a reparameterized construction as described in Equation~\ref{eq:reparam-sampling-lie-group} will be susceptible to local optima, which aligns with the empirical observation that homeomorphic VAEs can be difficult to train.

This raises the question of whether we can make VAEs less susceptible to topological obstructions by employing a different parameterization from that of Equation~\ref{eq:reparam-sampling-lie-group}. More concretely, we will consider a parameterization that admits multiple modes in the variational distribution $q_\phi(z|x)$, rather than the unimodal construction in Equation~\ref{eq:reparam-sampling-lie-group}, and define $f_\phi(x)$ in terms of the mode,
\begin{equation}
    f_\phi(x) := \argmax_{z} q_\phi(z|x).
\end{equation}
The intuition behind this approach is illustrated in Figure~\ref{fig:untwist}. If the variational distribution contains multiple modes, rather than a single peak centered at $\pi(h_\phi(x))$, then it may becomes possible for small changes to result in non-local changes, such as the reordering of points that is needed to untwist a figure 8, by switching between modes in the variational distribution. Moreover, increasing the number of parameters of $q_\phi(z|x)$ is likely beneficial for escaping some of the topological obstructions as some defects such as self-intersection are less likely to occur in high-dimensional spaces.

Motivated by this intuition, we consider a parameterization of $q_\phi(z|x)$ that employs normalizing flows~\citep{rezende2015variational}. In a normalizing flow, the sample $z$ is defined as a push forward of sequence of smooth bijective transformations, which makes it possible to reshape a simple unimodal distribution into a more complex multimodal distribution.
The probability of a sample from the final density can be computed by repeatedly applying the rule for change of variables. Concretely, given a base distribution $p(z_0)$ and a sequences of bijective transformations $r_k: \gZ \rightarrow \gZ$, we obtain a sample $z=z_K$ and probability by first sampling $z_0 \sim p(z_0)$ and defining a sequence of transformations
\begin{equation}
\label{eq:flow}
    z_{k} = r_k(z_{k-1}), \qquad \log p(z_k) = \log p(z_{k-1}) - \log \left| \det \frac{\partial \ r_k(z_{k-1})}{\partial \ z_{k-1}} \right| \qquad \text{for } k = 1\cdots K.
\end{equation}
Normalizing flows have been used to define distributions on geometric structures such as Lie groups by either defining a flow on the Lie algebra and computing the push-forward density of the exponential map~\citep{falorsi2019reparameterizing}, or designing structure-specific transformations~\citep{rezende2020normalizing}. 

In the GF-VAE, we will define a construction in which the encoder network returns the parameters of the flow. We show an overview of the architecture in Figure~\ref{fig:overview}. We define a network $h_{\phi}: \gX \to \gY := \sR^{K \times l}$ where $K$ and $l$ are the number of flow layers and  parameters respectively, a sequence of bijective transformations $\{r(\cdot ; y_{k})\}_{k=1}^{K}$ parameterized by $\{y_k\}_{k=1}^{K}$, and a base distribution distribution $p(z_0)$ which we define as a distribution on the group. The sequence $\{r(\cdot ; y_{k})\}_{k=1}^{K}$ is then used to define a new distribution on $\gZ$ given an $x$ by transforming the base distribution. This defines a conditional flow that can be trained using a stand lower bound (Eq.~\ref{eq:elbo}),
\begin{equation*}
\label{eq:gf-sample}
    z_0 \sim p(z_0), \qquad z_k = r(z_{k-1}; y_{k}), \qquad \log q(z_k | x) = \log q(z_{k-1} | x) - \log \left| \det \frac{\partial\ r(z_{k-1}; y_k) }{\partial\ z_{k-1}} \right|.
\end{equation*}


The choice of the flow $r$ is very important here as normalizing flows are typically defined on flat spaces. This means that, for a specific manifold $\gM$, additional care must be taken when designing them to ensure that they are a diffeomorphism from $\gZ$ to itself~\citep{rezende2020normalizing,durkan2019neural,mathieu2020riemannian}. In this paper, we employ a similar method as~\citeauthor{falorsi2019reparameterizing} where we define an affine layer followed by a single layer of spline flow, and a Tanh layer multiplied by $\pi$ as the last layer to push all the probability density between $-\pi$ and $\pi$. 


\begin{figure}[!t]
\centering
\vspace*{-1.0ex}
\includegraphics[width=\textwidth]{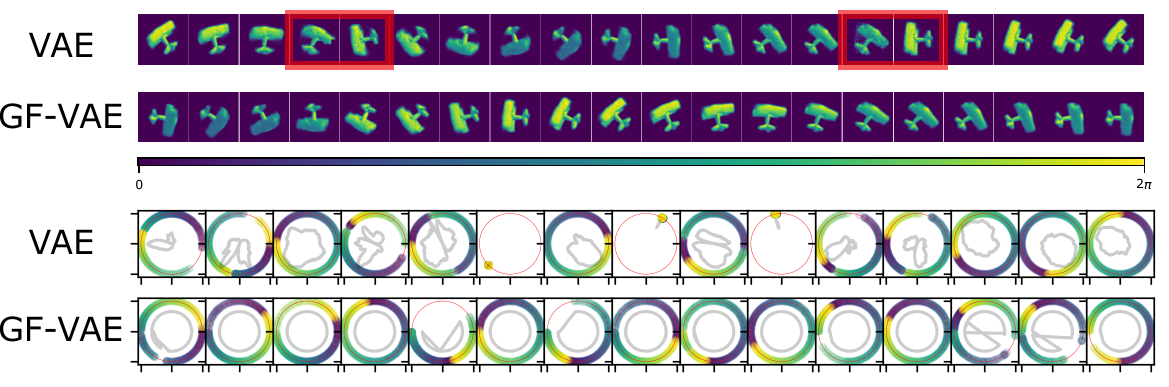}
\vspace*{-1.45ex}
\caption{\emph{Top}: Latent traversals in the decoder for a VAE and GF-VAE after training. \emph{Bottom:} The representations in latent space $\gZ$ for VAEs and GF-VAEs initialized with 15 different random seeds. For VAEs, the gray line inside the circle show the (scaled-down) $y$-traversals. For GF-VAEs, we it is difficult to visualize $\gY$ space as it is high-dimensional space. Therefore, to inspect for obstructions, we show the traversal of the $z$ vectors instead. }
\vspace*{-1.25ex}
\label{fig:interpolation}
\end{figure}

\begin{table}[!b]
\caption{Number of learned homeomorphic encoders and their continuity score for different objectives trained with 15 different random seeds.}
\label{tab:image-metrics}
\centering
\small
\begin{tabular}{lll|ll|ll}
& \multicolumn{2}{c}{\emph{L-shaped Tetromino}} & \multicolumn{2}{c}{\emph{Teapot}} & \multicolumn{2}{c}{\emph{Airplane}} \\ 
                    & \# H.       & Continuity             & \# H.       & Continuity           & \# H.       & Continuity          \\
\midrule
AE		                     & 2/15		 	 & $137.28		\pm 54.15$ 	 & 	 0/15		 	 & $173.37		\pm 21.99$ 	 &	0/15		 	 & $132.95		\pm 40.61$	 \\
VAE $(\beta = 1)$		     & 0/15		 	 & $117.02		\pm 23.17$ 	 & 	 6/15		 	 & $14.21		\pm 7.30$ 	 &	0/15		 	 & $86.72		\pm 65.12$ \\
VAE $(\beta = 4)$		     & 2/15		 	 & $132.45		\pm 58.79$ 	 & 	 \textbf{15/15}& $\boldsymbol{1.22		\pm 0.05}$	& 5/15		 	 & $67.88		\pm 59.20$	 \\
VAE + $y$-reg & 1/15 & $152.71 \pm 74.20$ &  1/15 & $14.51 \pm 81.$ & 3/15 & $124.10 \pm 93.53$ \\
GF-VAE $(\beta = 1)$		 & 5/15		 	 & $50.51		\pm 58.18$ 	 & 	 7/15		 	 & $21.07		\pm 24.42$ 	 &	0/15		 	 & $63.94		\pm 34.43$ 	 \\
GF-VAE $(\beta = 4)$		 & 9/15		 	 & $24.04		\pm 29.14$ 	 & 	 13/15		 & $8.16		\pm 17.89$ 	 &	7/15		 	 & $\boldsymbol{27.70		\pm 29.23}$	 \\
Action-GF-VAE $(\beta = 4)$	 & \textbf{10/15}	 	 & $\boldsymbol{19.35		\pm 23.37}$ 	 & 	13/15		 & $3.14		\pm 1.74$ 	 &	\textbf{9/15}	 & $33.26		\pm 40.03$ \\
\midrule
Sup-VAE		 & \textbf{15/15}		 & \textbf{5.74$\pm$0.49}	 & 13/15	 & 5.67$\pm$ 0.34		 &	0/15	 & $96.78\pm 38.$		 	 \\
\end{tabular}

\end{table}

\vspace{-0.5\baselineskip}
\section{Experiments}\label{sec:exp}
\vspace{-0.25\baselineskip}

We perform a series of experiments to evaluate the difficulty of learning a homeomorphic encoder. Concretely, we investigate how often we fall into one of the failure cases described in Section~\ref{sec:theory} in standard VAEs with geometric latent spaces in practice. Subsequently, we examine how well GF-VAE can circumvent these topological obstructions during training. 

\paragraph{Baselines.} Throughout our experiments, we compare against (1) a standard VAE, and (2) a supervised VAE where in addition to maximizing the ELBO, the encoder is trained to predict the ground truth representations. These two scenarios serve as extremes on the spectrum of guiding the model to the right representation. 
We also compare against a deterministic autoencoder (AE) which does not regularize the embedding space.
We also tried regularizing the y-space to be close $S_1$ (by penalizing $(\|y\|_2 - 1)^2$) in order to mitigate the optimization problem discussed in Subsection~\ref{subsec:maggrowth}, which we refer to as ``reg-$y$'' loss. 
Finally, we evaluate the $\beta$-VAE objective~\citep{higgins2017betavae} which increases the regularization on the latent space by upweighting the KL term in Eq~\ref{eq:elbo}. 
All models employ a 4-layer CNN architecture for the encoder and decoder with LeakyReLU activations. For the decoder, we also experiment with \emph{action-decoder} proposed by~\citeauthor{falorsi_explorations_2018}, which we found helpful for learning a homeomorphic mapping. The action-decoder uses a special first layer where the group action is applied to a set of learned Fourier coefficients rather than directly being passed as input to the architecture. For further details regarding our experiments, please refer to Appendix~\ref{app:sec:exp}.

\paragraph{Evaluation.} We evaluate all models based on two criteria: (1) Has the encoder learned a homeomorphic mapping? and (2) Has the decoder learned a good model of the data? Assessing whether a learned mapping is homeomorphic is challenging. To verify homeomorphism, we follow the evaluation proposed in~\cite{falorsi_explorations_2018} by examining whether the encoder yields a continuous path when interpolating in the data manifold from $-\pi$ to $\pi$. Details on evaluating continuity are provided in Appendix~\ref{app:sec:cont}. To determine how often the models encounter the topological obstructions described in Section~\ref{sec:theory}, we also report crossing and winding numbers in Appendix~\ref{app:sec:additional-results}. 
A crossing number grater than 0 implies a ``figure 8'' obstruction, and a winding number that is not equal to $1$ or $-1$ implies winding number obstruction. 
Lastly, we measure the log-likelihood to assess how well each model approximates the data manifold.
If the model has diverged during training due to posterior collapse, we report it as a non-homeomorphic mapping and ignore its continuity score in the average. 

\vspace{-0.5\baselineskip}
\subsection{Images: SO(2)}
\vspace{-0.5\baselineskip}

In our first experiment, we train on images of an L-shaped tetromino~\citep{bozkurt2021rate}, a teapot, and an airplane~\citep{shape_benchmark}. The $\mathrm{SO}(2)$ manifold corresponding to each object is made by rotating the image of the object around the center. We report our findings in Tables~\ref{tab:image-metrics} and~\ref{app:tab:image-metrics}. 


We observe that even though the types of obstructions vary across images, both VAE and AE in general fail to learn a homeomorphic encoder. The GF-VAE objective improves performance noticeably across both metrics. We also did not find $y$ regularization to be very helpful as it mainly stabilized training towards whatever mapping that was achieved at the early stages of training. We observed the main failure case for most of the non-homeomorphic encoders was due to discontinuity emerging from figure-eight obstruction. Moreover, looking at the learning curves in Figure~\ref{app:fig:winding-numbers}, we observe that the winding number is susceptible to change during training which suggests that the continuity assumption is of importance in proposition~\ref{prop:winding}.
GF-VAE resolves both issues, which allows us to interpolate nicely in the $\gZ$-space (Figure~\ref{fig:interpolation}). What is very surprising is that in the case of airplanes, we see that even supervised objective fails to overcome these optimization obstructions, while a GF-VAE is able to achieve this at a much better rate. 

We also observe that increasing the $\beta$ value for the KL term generally helps. This is perhaps unsurprising given that a high $\beta$ value encourages the latent space to cover the prior and therefore discourages the winding numbers from being 0 and improves the performance on the continuity metric. In the case of the teapots, we in fact observe that increasing $\beta$ is sufficient to learn a homeomorphic encoder. However, GF-VAE still generally scores better in terms of continuity and winding numbers.

\begin{figure*}
    \centering
    \includegraphics[width=0.9\textwidth]{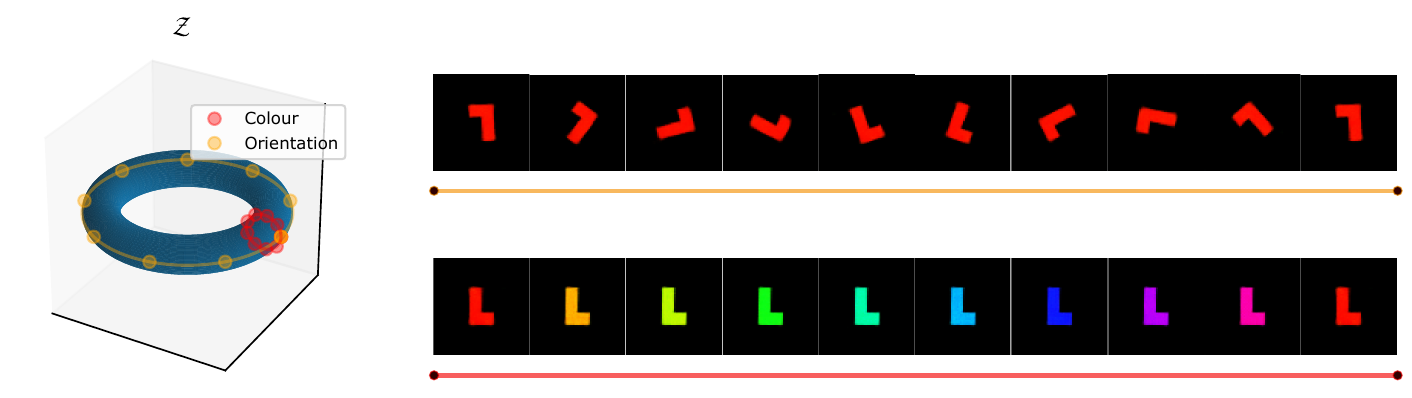}
    \vspace*{-2.25ex}
    \caption{Latent traversals in the decoder of a GF-VAE trained on Tetrominoes with a torus latent spaces. In each traversal, we keep the value for one Lie group fixed and do a geodesic interpolation across the other group. The traversals in the latent space highlighted are by red and orange. }
    \vspace*{-1.25ex}
    \label{fig:torus-result}
\end{figure*}

\vspace{-0.5\baselineskip}
\subsection{Images: Torus}
\vspace{-0.5\baselineskip}

In this experiment, we consider a ring torus as the latent space, which is homeomorphic to the Lie group $\mathrm{SO}(2) \times \mathrm{SO}(2)$. We create a dataset homeomorphic to this group by independently rotating the L-shaped tetromino in both orientation and color. All models can be extended to Tori by simply duplicating the latent space and learning multiple encodings to $\mathrm{SO}(2)$ in parallel. This is done by defining a factorized density $q_\phi(z^{(1)}, z^{(2)}|x) = q_\phi(z^{(1)}|x)q_\phi(z^{(2)}|x)$, where each distribution $q_\phi(z^{(i)}|x)$ defines a distribution on a $S^1$. Ideally, we want one subgroup to correspond to colour and the other to orientation. In this setting, we evaluate homeomorphism by picking 10 different values in either colour or orientation and measuring the continuity of the encoded path when interpolating in the data manifold on the other attribute. We identify the encoder as homeomorphic if the average continuity score of all 10 paths is below a certain threshold (see Appendix~\ref{app:sec:cont}). Unsurprisingly, we found that learning a homeomorphic mapping in tori is more challenging compared to circles. Out of 15 runs, the VAE models learn a homeomorphic mapping 1 time, while a GF-VAE manages to learn a homeomorphic mapping 7 times. To be able to align each Lie group with the corresponding attribute, we also used the weakly supervised proposed by~\citeauthor{locatello2020disentangling}, where the model receives a sequence of images in which only the angle or the colour is changing. We show the latent traversal in the decoder model for one of the successful runs of GF-VAE in Figure~\ref{fig:torus-result}. As we can see, the model has successfully managed to disentangle colour and orientation in the latent space. 

\section{Related Work}
\label{sec:related-work}
\vspace{-0.25\baselineskip}

\paragraph{Learning Geometric Representations.} There has been a large amount of work concerned with learning representations from data with geometric structure in the unsupervised or weakly supervised setting. 
One line of work has focused on the use of geometric spaces such as lie groups in latent space~\citep{davidson_hyperspherical_2018,falorsi_explorations_2018,rey_diffusion_2019,vadgama2022kendall}. In ~\citep{miaoincorporating}, it was argued that that geometric inductive biases in VAEs should be Incorporated via a deterministic mapping rather than the prior which is consistent with our work. As we show in this work, naively incorporating a geometric bias leads to topological obstructions. Another line of work has focused on inferring the latent structure by exploring the local connectivity information~\citep{moor2020topological, xingyu2021neighborhood,lee2021neighborhood,pfau_disentangling_nodate}.



\paragraph{Learning Disentangled Representations.}
Topological group structure can be used to define a notion of disentanglement that is based on equivariant properties under group transformations~\citep{higgins_towards_2018}.
There exists a  body of work that aims to learn both disentangled and equivariant group representations.
One set of methods relies on agent actions to predict the group element~\citep{caselles-dupre_symmetry-based_2019,quessard2020learning}. 
An adjacent line of work is to regularize encoders to be equivariant with respect to the group action, using triplets of the form $(x_t,m_t,x_{t+1})$ or longer sequences~\citep{guo2019affine,dupont2020equivariant,pmlr-v162-tonnaer22a}. 
Some approaches focus on learning symmetry-based disentangled representations in fully unsupervised settings or by enforcing commutativity in the latent Lie group~\citep{yang2022towards,zhu_commutative_2021}. Another related area disentangles \emph{class} and \emph{pose} in the latent space~\citep{marchetti2022equivariant,winter2022unsupervised}. Our work does not focus on disentanglement, but the topological obstructions that we describe are relevant to this domain.

\paragraph{Topological Obstructions in Learning.}
The topological obstructions that we consider in this work are distinct from the homological obstructions that have been characterized in prior work~\citep{de_haan_topological_2018,batson2021topological,falorsi_explorations_2018,rey_diffusion_2019}.
Theorem 1 by \citet{de_haan_topological_2018} defines homological obstructions as follows:
For any latent space $\gZ$ with non-trivial topology, it is possible to learn an encoder $f_\phi$ that is continuous when restricted to $\gM_x \subset \gX$, but this encoder must be discontinuous on the full space $\gX$. For this reason, \citet{falorsi_explorations_2018} and others \citep{xu2018spherical, meng2019spherical} use the two-part encoder from \eqref{eqn:fphi}, inserting discontinuous layers $\pi$ when mapping to circles, spheres, $\mathrm{SO}(n)$, or other manifolds.  This explicit discontinuity circumvents the homological obstruction without forcing the linear layers of the network to approximate discontinuities using large weights, which we and others find leads to instability during training and inferior reconstructions (Section \ref{sec:exp}).  

\paragraph{Normalizing Flows on Manifolds.} In recent years, there has been a surge of interest in extending normalizing flows, originally formulated for Euclidean spaces, to Riemannian manifolds~\citep{rezende2020normalizing,mathieu2020riemannian,kohler2021smooth,durkan2019neural}. One approach involves leveraging the Lie group structure of the manifold to define a parametrization of the flow~\citep{rezende2020normalizing}, which is also the strategy we adopt in this work. These recent advancements have paved the way for applying normalizing flows to the group $\mathrm{SO}(3)$ in order to learn pose estimation in molecular structures~\citep{kohler2023rigid} and images~\citep{liu2023delving,pmlr-v139-murphy21a}. Our work differs from these efforts in that the primary objective of our method is not to target flexible distributions on Riemannian manifolds, but rather to demonstrate that utilizing a flow as a variational distribution aids \emph{optimization} in learning a homeomorphic embedding.


\vspace{-0.25\baselineskip}
\section{Conclusion}
\label{sec:conclusion}
\vspace{-0.25\baselineskip}


In this paper, we investigate obstructions to optimization that can arise when learning encoders for topological spaces.  We classify different types of obstructions, provide evidence these are encountered in practice, and give mathematical explanations for how they occur under certain assumptions. 
We propose GF-VAE, a novel model that employs normalizing flows as variational distributions to help circumvent these issues and show its effectiveness across several datasets when encoding to circles and tori. This work contains several limitations.  Firstly, our theoretical analysis is limited by the idealized assumptions necessary to analyze the method using topological tools which do not exactly match those encountered in practice. 
Secondly, the metrics we define such as winding and crossing numbers are harder to define and compute for higher dimensional manifolds. Future work includes expanding our analysis and techniques to a wider array of Lie groups and non-group manifolds.

\begin{ack}
This work was supported by NSF grants \#2107256 and \#2134178. We also would like to thank Patrick Forré, Sharvaree Vadgama, Marco Federici, and Erik Bekker for helpful discussions. 


\end{ack}

\bibliography{references}
\bibliographystyle{plainnat}

\appendix

\newpage
\section{Experimental Details}
\label{app:sec:exp}

We train all our models for 150 epochs with a batch-size of 600. For optimization, we use the RAdam optimizer~\citep{liu2019variance} with a learning rate of 5e-4. 
For all image datasets, we use a 4-layer CNN with kernel, stride, and padding of size 4, 2 and 1 respectively followed by a $leakyReLU$ activation (Table~\ref{app:tab:arch-images}) for the encoder and a $Sigmoid$ activation for the decoder.
The network $\sigma_{\epsilon}^2$, shares the same architecture with the only difference being the last layer, which is a fully-connected network followed by a $Softplus$ activation which is common in standard VAEs.   
In all our experiments, we used $K=1$ for the GF-VAE models as it was sufficient to avoid the optimization obstructions mentioned in the paper.
All models were initialized and trained with 15\footnote{We used a higher number of random seeds than normal to account for the training instability.} different random seeds. 
%


\begin{table}[!h]
    \centering
    \caption{Architecture of the encoders and decoders employed for all image datasets.}
        \begin{tabular}{|l|}
        \toprule
        \textbf{Encoder} \\
        \midrule
        Input $32\times32$ images  \\
        \hline 
        $4\times4$ conv. 32 stride 2, LeakyReLU.  \\
        \hline 
        $4\times4$ conv. 32 stride 2, LeakyReLU. \\
        \hline 
        $4\times4$ conv. 64 stride 2, LeakyReLU.  \\
        \hline
        $4\times4$ conv. 64 stride 2, LeakyReLU. \\
        \hline
        F.C. 2, $\pi(y):= y/\|y\|$. \\
        \bottomrule
        \end{tabular}
    \hspace*{3ex}
        \begin{tabular}{|l|}
        \toprule
        \textbf{Decoder}  \\
        \midrule
        Input $z \in \sR^2$ s.t. $\|z\|_2 = 1$  \\
        \hline 
        $4\times4$ deconv. 64,  stride 2, ELU.  \\
        \hline 
        $4\times4$ deconv. 64, stride 2, ELU. \\
        \hline 
        $4\times4$ deconv. 32, stride 2, ELU.  \\
        \hline
        $4\times4$ deconv. 3, \hspace{0.2em} stride 2, Sigmoid.  \\
        \bottomrule
        \end{tabular}
    
    \label{app:tab:arch-images}
\end{table}


\section{Proofs}
\label{app:sec:proof}

We include the proofs for the propositions in the main text.

\subsection{Figure Eight Local Minimum}

\begin{proposition}  Assume that $f_{\phi(t)}$ undergoes continuous optimization and is thus continuous in $t$.  Assume that $\pi \circ f_{\phi(t)} \circ g$ is injective for all $t$.  The cyclic ordering induced on $k$ points by $f_{\phi(0)}$ is equal to that induced by $f_{\phi(1)}$. Thus a figure 8 embedding, which corresponds to cyclic order $(z_1,z_2,z_4,z_3)\ \mathrm{mod}\ C_4$, cannot be transformed to a homeomorphic embedding, which has cyclic order $(z_1,z_2,z_3,z_4)\ \mathrm{mod}\ C_4$ or $(z_4,z_3,z_2,z_1)\ \mathrm{mod}\ C_4$.
\end{proposition}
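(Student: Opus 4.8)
The plan is to prove that the cyclic ordering of finitely many points on $S^1$ induced by the composite map $\psi_{\phi(t)} := \pi \circ f_{\phi(t)} \circ g \colon S^1 \to S^1$ is locally constant in $t$, and hence constant on the connected interval $[0,1]$. First I would set up the notion of cyclic ordering carefully: given $k$ distinct points $\theta_1,\dots,\theta_k \in S^1$ on the data manifold, injectivity of $\psi_{\phi(t)}$ guarantees the images $\psi_{\phi(t)}(\theta_1),\dots,\psi_{\phi(t)}(\theta_k)$ are $k$ distinct points on the target $S^1$, which therefore carry a well-defined cyclic order (an element of the set of cyclic orderings, i.e.\ orderings modulo the cyclic group $C_k$). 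The goal is to show the map $t \mapsto (\text{cyclic order of the } \psi_{\phi(t)}(\theta_i))$ is continuous into this finite discrete set, hence constant.

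Next I would argue local constancy. Fix $t_0$. Because $\phi(t)$ is continuous in $t$ and the network $h_\phi$, the projection $\pi$, and $g$ are continuous, the map $(t,\theta) \mapsto \psi_{\phi(t)}(\theta)$ is jointly continuous; in particular each $t \mapsto \psi_{\phi(t)}(\theta_i)$ is continuous into $S^1$. At $t_0$ the $k$ image points are distinct, so we may choose disjoint open arcs $U_1,\dots,U_k \subset S^1$ around them that are arranged in the same cyclic order as the points themselves. By continuity there is $\delta > 0$ so that for $|t - t_0| < \delta$ we have $\psi_{\phi(t)}(\theta_i) \in U_i$ for every $i$. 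Since the arcs are disjoint and cyclically ordered, and since $\psi_{\phi(t)}$ remains injective (so no two image points collide or swap by passing through one another), the cyclic order of the $\psi_{\phi(t)}(\theta_i)$ agrees with the cyclic order of the arcs $U_i$, which is the cyclic order at $t_0$. Hence the cyclic order is constant on $(t_0-\delta, t_0+\delta)$, and by connectedness of $[0,1]$ it is constant throughout; in particular the orderings at $t=0$ and $t=1$ coincide.

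The one subtlety — and I expect it to be the main thing to get right rather than the main obstacle — is the claim that, under injectivity plus the ``points stay in disjoint arcs'' condition, the cyclic order cannot jump: one must rule out the scenario where an image point leaves its arc $U_i$, travels around, and re-enters a different arc, which injectivity alone over a single instant does not forbid. This is precisely why the disjoint-arcs argument must be run with a uniform $\delta$ obtained from compactness of $[t_0-\delta', t_0+\delta']$, so that the continuous path $t\mapsto \psi_{\phi(t)}(\theta_i)$ cannot escape $U_i$ on the whole subinterval. Finally I would apply the constancy result with $k=4$ to the four endpoints $z_1,z_2,z_3,z_4$ of the two intervals of a figure-8 embedding: since $(z_1,z_2,z_4,z_3)$ and $(z_1,z_2,z_3,z_4)$ are genuinely different as cyclic orderings mod $C_4$ (and likewise $(z_1,z_2,z_4,z_3) \not\equiv (z_4,z_3,z_2,z_1)$ mod $C_4$), no continuous injective optimization path can connect a figure-8 encoder to a homeomorphic one. $\square$
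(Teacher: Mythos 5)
Your argument is correct, and it takes a genuinely different route from the paper's. You prove exactly what the proposition needs -- that the cyclic ordering of $\bigl(\psi_{\phi(t)}(\theta_1),\dots,\psi_{\phi(t)}(\theta_k)\bigr)$ is a locally constant function of $t$ -- by placing the $k$ distinct image points at time $t_0$ in pairwise disjoint arcs, using continuity of $t \mapsto \psi_{\phi(t)}(\theta_i)$ to keep each point inside its arc on a $\delta$-interval, and then invoking connectedness of $[0,1]$; the finishing check that $(z_1,z_2,z_4,z_3)$ is not a cyclic shift of $(z_1,z_2,z_3,z_4)$ or $(z_4,z_3,z_2,z_1)$ is also right. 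The paper instead phrases the statement as a question about the configuration space $\mathrm{Conf}_k(S^1)$ and computes its path components: it exhibits a homeomorphism $\mathrm{Conf}_k(S^1) \cong \mathrm{SO}(2) \times \mathrm{Conf}_{k-1}(\mathbb{R})$, identifies the components of $\mathrm{Conf}_{k-1}(\mathbb{R})$ with permutations via a fundamental-domain and intermediate-value-theorem argument, and concludes that two configurations are joined by a path \emph{if and only if} they share a cyclic ordering. Your approach is more elementary and self-contained, proving only the invariance direction (which suffices here); the paper's buys the stronger converse classification of $\pi_0$, at the cost of more machinery. Two small remarks: the ``leave the arc and re-enter elsewhere'' scenario you worry about is already excluded by the definition of $\delta$ (for every $t$ in the interval the point lies in $U_i$, so it is never in another arc), so the appeal to compactness is unnecessary; and, like the paper, you implicitly assume $h_{\phi(t)}(g(\theta_i)) \neq (0,0)$ for all $t$ so that the composite with the projection $\pi$ is defined and continuous in $t$ -- it would be worth stating this explicitly since it is where the hypothesis can fail in practice.
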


\begin{proof}
Since we assume $\pi \circ f_{\phi(t)} \circ g$ is injective for all $t$, the path $\mathbf{z}(t) = (\pi \circ f_{\phi(0)} \circ g(\theta_i))_{i=1}^4$ is inside  
the $k$-fold configuration space on $S^1$ defined $\mathrm{Conf}_{k}(S^1) = \lbrace (z_1,\ldots,z_k) \in (S^1)^k : z_i \not = z_j \text{ for } i \not = j \rbrace$.  In order to prove the claim, we will show that the path-connected components of $\mathrm{Conf}_{k}(S^1)$ correspond to cyclic orderings of $(z_1,\ldots,z_k)$ and thus the start and end point of every path share a cyclic ordering.

Mapping $(z_1,\ldots,z_k) \mapsto (z_k,(z_k^{-1}z_1,\ldots,z_k^{-1}z_{k-1}))$ gives a homeomorphism $\mathrm{Conf}_k(S^1) \cong \mathrm{SO}(2) \times \mathrm{Conf}_{k-1}(S^1\setminus \lbrace 1 \rbrace) \cong \mathrm{SO}(2) \times \mathrm{Conf}_{k-1}(\mathbb{R})$.  Let $\tilde{z}_i = z_k^{-1} z_i$. Consider $D = \lbrace (\tilde z_1,\ldots,\tilde z_{k-1}) : \tilde z_1 < \ldots < \tilde z_{k-1} \rbrace \subset \mathrm{Conf}_{k-1}(\mathbb{R})$.  

We can identify the connected components of $\mathrm{Conf}_{k-1}(\mathbb{R})$. The set $D$ is a fundamental domain for the action of the symmetric group $S_{k-1}$ on $\mathrm{Conf}_{k-1}(\mathbb{R})$.  Thus $\mathrm{Conf}_{k-1}(\mathbb{R}) = \coprod_{\sigma \in S_k} \sigma(D)$ is a disjoint union.  Linear interpolation shows $D$ is connected.  The sets $D$ and $\sigma(D)$ for $\sigma \in S_k$ are not connected.  Consider a path from $\mathbf{z} = (z_1,\ldots,z_k) \in D$ to $\sigma(\mathbf{z}) \in \sigma(D)$. The element $\sigma$ must reverse the order of at least two elements $z_j < z_i$. Thus the function $f(\mathbf{z}) = z_i - z_j$ must take the value 0 over the path by intermediate value theorem.  Hence the path cannot be in $\mathrm{Conf}_{k-1}(\mathbb{R})$.  Thus the connected components of  $\pi_0(\mathrm{Conf}_{k-1}(\mathbb{R})) \cong S_{k-1}$.  

Since $\mathrm{SO}(2)$ is connected, $\pi_0(\mathrm{Conf}_{k}(S^1)) \cong S_{k-1}$. 
That is each connected component of $\mathrm{Conf}_{k}(S^1)$ is labeled by an element of $S_{k-1}$ describing the ordering of $\tilde{z}_1,\ldots,\tilde{z}_{k-1}$ in $\mathbb{R}$.  Each ordering of $(\tilde{z}_1,\ldots,\tilde{z}_{k-1})$ in turn corresponds to a different cyclic ordering of $z_1,\ldots,z_k$ in $S^1$, that is, a different element of $S_k / C_k$.  Thus two $k$-point configurations are homotopic if and only if they have the same cyclic ordering.  
\end{proof}

\subsection{Degree Obstruction}

\begin{proposition}
The winding number of the initialized model and final model are equal $w(\pi' \circ h_{\phi(1)} \circ g) =  w(\pi' \circ h_{\phi(1)} \circ g)$.
\end{proposition}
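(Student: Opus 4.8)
The plan is to show that the winding number $w(\psi_\phi)$, where $\psi_\phi := \pi \circ h_\phi \circ g : S^1 \to S^1$, is a homotopy invariant, and then to exhibit a homotopy between $\psi_{\phi(0)}$ and $\psi_{\phi(1)}$ built from the gradient-flow path $\phi(t)$. First I would recall the standard fact from algebraic topology that the winding number is precisely the element of $\pi_1(S^1) \cong \mathbb{Z}$ represented by the loop, and that homotopic loops represent the same element; equivalently, $\deg : [S^1, S^1] \to \mathbb{Z}$ is a bijection, so two maps $S^1 \to S^1$ are homotopic if and only if they have the same degree. Hence it suffices to produce a continuous homotopy $H : S^1 \times [0,1] \to S^1$ with $H(\cdot, 0) = \psi_{\phi(0)}$ and $H(\cdot, 1) = \psi_{\phi(1)}$.

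Next I would construct this homotopy explicitly. Define $H(\theta, t) := \pi\bigl(h_{\phi(t)}(g(\theta))\bigr) = (\pi \circ h_{\phi(t)} \circ g)(\theta)$. The map $g : S^1 \to \gM_x$ is continuous (it is a homeomorphism onto its image), and by the continuous-optimization assumption $\phi(t)$ is continuous in $t$, so $h_{\phi(t)}(x)$ is jointly continuous in $(x,t)$ — this is exactly the hypothesis stated in the paragraph preceding the proposition. The only remaining issue is that $\pi : \mathbb{R}^2 \setminus \{0\} \to S^1$, $y \mapsto y/\|y\|$, is continuous only away from the origin; this is why the excerpt assumes $h_{\phi(t)}(x) \neq (0,0)$ for all $t,x$. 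Under that assumption, the composite $H(\theta,t)$ is a composition of continuous maps, hence continuous on $S^1 \times [0,1]$, so it is a genuine homotopy.

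Putting the two pieces together: $\psi_{\phi(0)}$ and $\psi_{\phi(1)}$ are homotopic via $H$, therefore $w(\psi_{\phi(0)}) = w(\psi_{\phi(1)})$, which is the claim. I would close by noting, as the excerpt already does informally, where the idealization bites: in practice $\phi$ is not continuous in $t$ (SGD makes discrete jumps) and the embedding may pass through or near the origin, so $\pi$ fails to be continuous along the path — these are exactly the two places the homotopy argument breaks, and they are the reason the winding number \emph{can} change empirically. The main obstacle in the proof is not any deep computation but rather the careful bookkeeping of the two hypotheses (continuity of $t \mapsto \phi(t)$ and avoidance of the origin) needed to guarantee that $H$ is well-defined and continuous everywhere on $S^1 \times [0,1]$; once that is in place, invariance of degree under homotopy does the rest.
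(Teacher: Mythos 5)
Your proof is correct and is essentially the same argument the paper gives: the paper compresses it into ``the winding number is a continuous function of $t$ with values in the discrete set $\mathbb{Z}$, hence constant,'' which is exactly the homotopy-invariance of the degree that you spell out via the explicit homotopy $H(\theta,t) = (\pi \circ h_{\phi(t)} \circ g)(\theta)$, under the same two hypotheses (continuity of $t \mapsto \phi(t)$ and avoidance of the origin). Your version is just a more detailed write-up of the paper's one-line proof, including the correct identification of where the idealization fails in practice.
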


\begin{proof}
The winding number of a map is a continuous function $t \mapsto w(\pi' \circ h_{\phi(t)} \circ g)$.  Since the output space $\mathbb{Z}$ is discreet, the winding number must be constant in $t$. 
\end{proof}

\subsection{Magnitude Growth in $\gY$}
\emph{We assume that $dF$ is full rank}, which is a reasonable assumption for an overparameterized neural network.
In that case $M = dF^T dF$ is a positive definite symmetric matrix and can be orthogonally diagonalized $M = Q \Lambda Q^T$ where $Q$ is orthogonal and  
\[
\Lambda = \begin{pmatrix}
\lambda_1 & 0 \\
0 & \lambda_2 
\end{pmatrix}
\]
and $\lambda_i > 0$.
The maximum angle between $\bm{x} = \nabla_y \gL$ and $M\bm{x} = \tilde{\nabla}_y \gL $ can then computed in terms of the eigenvalues $\lambda_i$. This maximum is computed for the case of an $n\times n$ symmetric positive definite matrix here\footnote{karakusc (https://math.stackexchange.com/users/176950/karakusc), Maximum angle between a vector $x$ and its linear transformation $A x$, URL (version: 2017-05-06): https://math.stackexchange.com/q/2266057}.  We include the proof for the $2 \times 2$ case we consider here for completeness.

\begin{lemma}
The maximum angle between ${x}$ and $M {x}$ for ${x} \in \mathbb{R}^2_{\not = 0}$ is 
\[
\cos^{-1} \left( \frac{2 \sqrt{\lambda_1 \lambda_2}}{\lambda_1 + \lambda_2}\right).
\]
\end{lemma}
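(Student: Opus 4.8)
The plan is to reduce the statement to a one-variable optimization problem by exploiting the structure of a $2\times 2$ symmetric positive definite matrix. First I would diagonalize $M = Q\Lambda Q^T$ as already set up, and observe that rotating by $Q$ is an isometry, so the angle between $\bm{x}$ and $M\bm{x}$ equals the angle between $Q^T\bm{x}$ and $\Lambda Q^T\bm{x}$. Hence without loss of generality I may assume $M = \Lambda = \mathrm{diag}(\lambda_1,\lambda_2)$ and, by scale invariance, that $\bm{x} = (\cos\alpha, \sin\alpha)^T$ lies on the unit circle.

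Next I would write down the cosine of the angle $\gamma$ between $\bm{x}$ and $\Lambda\bm{x}$ directly:
\begin{equation*}
\cos\gamma = \frac{\bm{x}^T \Lambda \bm{x}}{\|\bm{x}\|\,\|\Lambda \bm{x}\|} = \frac{\lambda_1 \cos^2\alpha + \lambda_2 \sin^2\alpha}{\sqrt{\lambda_1^2 \cos^2\alpha + \lambda_2^2 \sin^2\alpha}}.
\end{equation*}
Maximizing the angle means minimizing $\cos\gamma$ over $\alpha$. I would substitute $u = \cos^2\alpha \in [0,1]$ (so $\sin^2\alpha = 1-u$) and minimize the function
\begin{equation*}
f(u) = \frac{\lambda_1 u + \lambda_2 (1-u)}{\sqrt{\lambda_1^2 u + \lambda_2^2 (1-u)}}.
\end{equation*}
This is a routine calculus exercise: differentiate, set the numerator of $f'(u)$ to zero, and solve. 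The critical point works out to $u^\star = \frac{\lambda_2}{\lambda_1+\lambda_2}$ (equivalently $1-u^\star = \frac{\lambda_1}{\lambda_1+\lambda_2}$), which lies in $[0,1]$, and one checks it is a minimum (the endpoints $u=0,1$ give $\cos\gamma = 1$, i.e.\ angle zero, as expected since then $\bm{x}$ is an eigenvector). Plugging $u^\star$ back in, the numerator becomes $\frac{2\lambda_1\lambda_2}{\lambda_1+\lambda_2}$ and the denominator becomes $\sqrt{\frac{\lambda_1^2\lambda_2 + \lambda_2^2\lambda_1}{\lambda_1+\lambda_2}} = \sqrt{\lambda_1\lambda_2}$, yielding $\cos\gamma_{\max} = \frac{2\sqrt{\lambda_1\lambda_2}}{\lambda_1+\lambda_2}$, hence the claimed maximum angle $\cos^{-1}\!\big(\tfrac{2\sqrt{\lambda_1\lambda_2}}{\lambda_1+\lambda_2}\big)$.

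I don't anticipate a genuine obstacle here — the result is essentially a computation — but the step requiring the most care is verifying that the interior critical point $u^\star$ is indeed the global minimum rather than a maximum or saddle, and handling the degenerate case $\lambda_1 = \lambda_2$ (where $M$ is a positive multiple of the identity, every $\bm{x}$ is an eigenvector, the angle is identically zero, and the formula correctly gives $\cos^{-1}(1) = 0$). A clean way to sidestep sign-checking in the derivative is to instead minimize $f(u)^2 = \frac{(\lambda_1 u + \lambda_2(1-u))^2}{\lambda_1^2 u + \lambda_2^2(1-u)}$, which is a ratio of a quadratic to an affine function and whose minimization is transparent; comparing the interior value against the boundary values $f(0)^2 = f(1)^2 = 1$ then immediately confirms it is the minimum since $\frac{2\sqrt{\lambda_1\lambda_2}}{\lambda_1+\lambda_2} \le 1$ by AM–GM, with equality exactly when $\lambda_1=\lambda_2$.
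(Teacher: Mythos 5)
Your proof is correct and follows essentially the same route as the paper's: diagonalize $M = Q\Lambda Q^T$, reduce to minimizing the cosine ratio in a single variable ($u=\cos^2\alpha$, identical to the paper's $a=y_1^2$), locate the critical point at $\lambda_2/(\lambda_1+\lambda_2)$, and compare against the boundary value $1$. Your explicit handling of $\lambda_1=\lambda_2$ and the AM--GM check are minor tidy additions, not a different argument.
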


\begin{proof}
The angle is maximized at the minimum value of \[
\frac{x^T M x}
{\| x \| \| M x\|}.
\]
It suffices to consider $\|x \| = 1$.  Substituting $M = Q \Lambda Q^T$ and $y = Q x$, we want to minimize
\[
\frac{x^T Q^T \Lambda Q x}
{ x^T Q^T \Lambda^2 Q x} = \frac{y^T \Lambda y}
{ y^T \Lambda^2 y} 
\]
over all $\| y \| = 1$ since $\| Q x \| = \|x\| = 1$.
Letting $a = y_1^2$ and noting $y_1^2+y_2^2 = 1$, this is equal to minimizing 
\[
\frac{a \lambda_1 + (1-a) \lambda_2}{a \lambda_1^2 + (1-a) \lambda_2^2}
\]
over $0 \leq a \leq 1$.  Setting the derivative equal to 0 gives
\[
\frac{(\lambda_1-\lambda_2)^2 (-\lambda_2 + a(\lambda_1+\lambda_2))}{2 \left(\lambda_2^2 + a (\lambda_1^2 - \lambda_2^2))\right)^{3/2}} = 0
\]
and yields one critical value at $a = \lambda_2 / (\lambda_1 + \lambda_2)$ corresponding to value $ \frac{2 \sqrt{\lambda_1 \lambda_2}}{\lambda_1 + \lambda_2}$.  This is the global minimum since the boundary values $a = 0$ and $a=1$ correspond to maxima with value 1.
\end{proof}

Thus the angle between $\nabla_y \gL$ and $\tilde{\nabla}_y \gL$ is bounded by $\theta =  \cos^{-1} \left( 2 \sqrt{\lambda_1 \lambda_2}/(\lambda_1 + \lambda_2)\right).$
  Given that $\nabla_y \gL$ is tangential to the circle, assuming for simplicity $\tilde{\nabla}_y \gL$ has constant length $L$ and uniform distribution $[-\theta,\theta]$ in angle to $\nabla_y \gL$, the norm of $y$ grows \emph{in expectation}.

\begin{proof}
Without loss of generality, $y=(0,R)$ and $v = (L \cos{t}, L \sin{t} )$ where $|t| < \theta$.  Then we evaluate
\begin{align*}    
\mathbb{E}[\|y + v\|^2] &= \frac{1}{2\theta} \int_{-\theta}^\theta \|(L \cos{t}, R + L \sin{t}) \|^2 \| dt \\
&= \frac{1}{2\theta} \int_{-\theta}^\theta (L^2 \cos^2{t} + R^2 + 2 R L \sin{t} + L^2 \sin^2{t}) dt  \\
&= \frac{1}{2\theta} \int_{-\theta}^\theta (L^2 + R^2) dt \\
&= L^2+R^2.
\end{align*}
by Pythagorean identity and the fact $\sin{t}$ is odd. 
\end{proof}



\section{Continuity Metric}
\label{app:sec:cont}

For measuring continuity, we adopt a similar method as~\citeauthor{falorsi_explorations_2018} and evaluate continuity in terms of how the largest ``jump'' compare to others when walking a continuous path $m_i \in \gM$ for $i = 1 \cdots N$ pairwise close points. We define $q_i$ to be the ratio distances between the ground-truth and the learned representation
\begin{equation*}
    q_i = \frac{d_{\gM}(\eta_{\phi}(m_i), \eta_\phi(m_{i+1}))}{d_{\gM}(m_{i}, m_{i+1})}.
\end{equation*}
From the set $\{q_i\}_i$, we compute the continuity metric $L_{\text{cont}}$ as
\begin{equation}
    L_{\text{cont}} = \frac{M}{P_\alpha}, \qquad M = \max_i q_i, \qquad P_{\alpha} = \alpha\text{-th percentile of }\{q_i\}_{i=1}^{N}.
\end{equation}
In our experiments, we set $\alpha=90$.

There are two differences between how we evaluate continuity compared to~\citep{falorsi_explorations_2018}. First, we measure the continuity of $\psi_\phi$ rather than $f_\phi$, which we argue is more relevant. Second, in~\citep{falorsi_explorations_2018}, the authors are mainly interested in verifying whether the encoder is discontinuous in the topological sense (which they verify by examining the inequality $M > \gamma P_{\alpha}$ for some $\gamma$). We on the other hand report continuity on the spectrum by computing the $\gamma$ that would make $\eta_\phi$ discontinuous. For evaluating homeomorphism, we conclude for an encoder to be homeomorphic if $L_{\text{cont}} < 10$  (empirically, we observed the mapping to appear smooth for a continuity score below this threshold).

\newpage 

\section{Decoding From $\gY$}
\label{app:decodefromy}

We consider the alternate strategy of removing the projection $\pi$ and adding a loss so that $y$ stays close to the desired manifold $\gM$ in $\gY$.  Winding number obstructions become far more prominent in this case. In Figure~\ref{app:fig:teapot-so2-y-decode-latents}, we show the latent space for different random seeds in the teapot case when we train with such an objective. 

\begin{figure}[!h]
\centering
\includegraphics[width=0.95\textwidth]{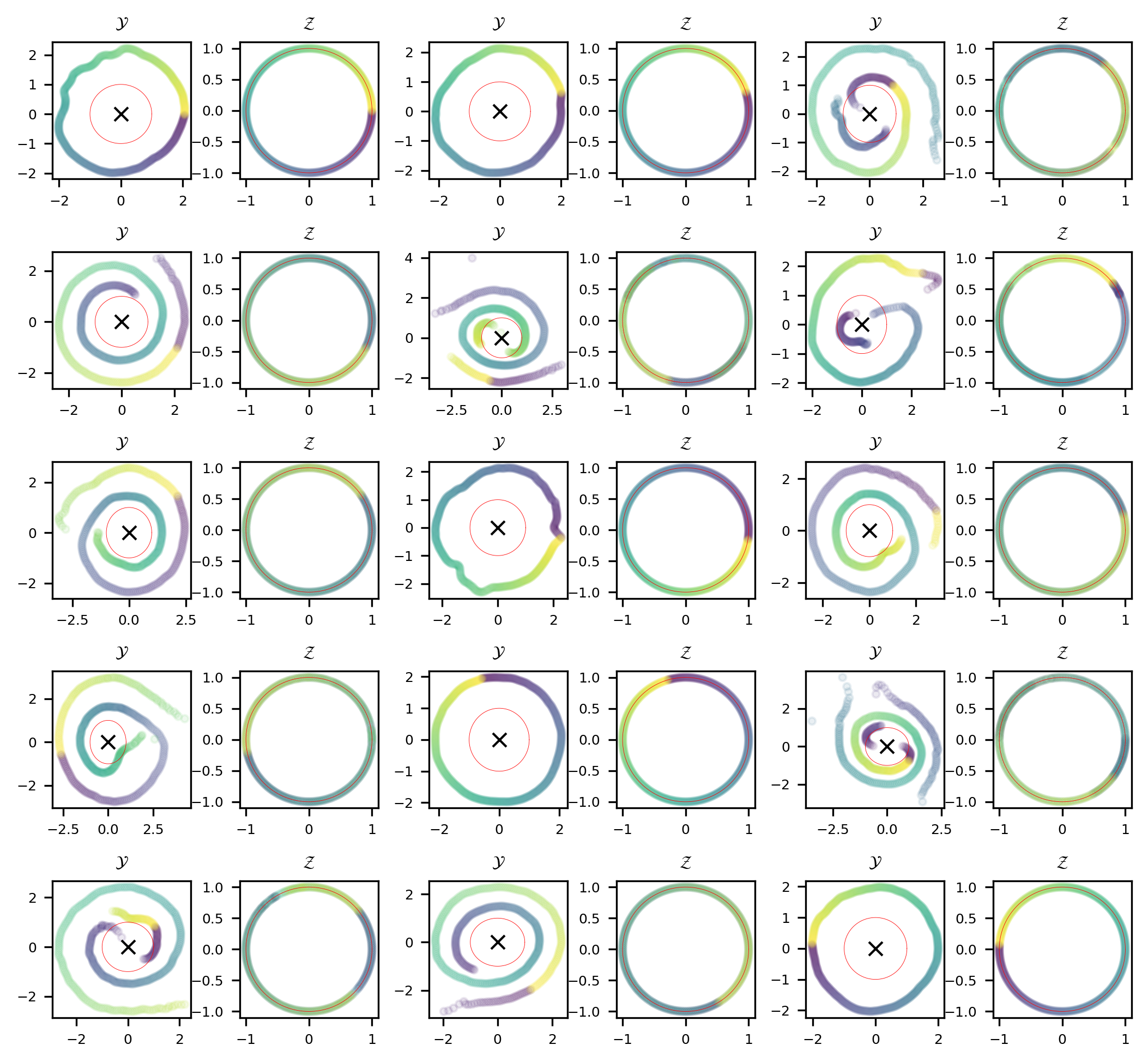}
\caption{$\gY$ and $\gZ$ space of Isom-AEs trained on teapots for 15 random seeds, where instead of decoding from $\gZ$, we decode from $\gY$ with an additional soft regularization that constrain $y$ values to have unit length.}
\label{app:fig:teapot-so2-y-decode-latents}
\end{figure}

\newpage

\section{Experiments: Additional Results}
\label{app:sec:additional-results}

We report the full results of our $\mathrm{SO}(2)$ experiments in Table~\ref{app:tab:image-metrics}.

\begin{table}[!h]
\centering
\caption{Comparison of different VAEs trained on the various image datasets in terms of number of encoders with homeomorphic mappings (\# H.), correct winding number (\# W.), and correct crossing number (\# C.) for 15 random seeds. We additionally report the error on continuity as well as negative Loglilkelihood (lower is better).}
\footnotesize
\begin{tabular}{llllll}
\multicolumn{6}{c}{\emph{L-shaped Tetrominoes}} \\ 
                    & \# H. & \# W.    & \# C. & Continuity    & $-$Loglilkelihood   \\
\midrule
AE		                     & 2/15		 & 2/15		 & 2/15	 & $137.28		\pm 54.15$ 	 & 	 $9.66\pm 5.63$		 \\
VAE $(\beta = 1)$		     & 0/15		 & 7/15		 & 0/15	 & $117.02		\pm 23.17$ 	 & 	 $7.26\pm 6.99$		 \\
VAE $(\beta = 4)$		     & 2/15		 & 9/15		 & 2/15	 & $132.45		\pm 58.79$ 	 & 	 $\boldsymbol{2.67\pm 0.30}$		 \\
GF-VAE $(\beta = 1)$		 & 5/15		 & 14/15	 & --	 & $50.51		\pm 58.18$ 	 & 	 $2.71\pm 1.67$		 \\
GF-VAE $(\beta = 4)$		 & 9/15		 & \textbf{15/15}	 & --	 & $24.04		\pm 29.14$ 	 & 	 $4.43\pm 1.84$		 \\
Action-GF-VAE $(\beta = 4)$	 & \textbf{10/15}	 & 14/15	 & --	 & $\boldsymbol{19.35		\pm 23.37}$ 	 & 	 $4.43\pm 1.67$		 \\
\midrule
\midrule
\multicolumn{6}{c}{\emph{Teapots}} \\ 
                    & \# H. & \# W.    & \# C. & Continuity     & $-$Loglilkelihood.   \\
\midrule
AE		                         & 0/15		 & \textbf{15/15}		 & 8/15	 & $173.37		\pm 21.99$ 	 & 	 $\boldsymbol{7.49\pm 0.16}$		 \\
VAE $(\beta = 1)$		         & 6/15		 & \textbf{15/15}		 & 8/15	 & $14.21		\pm 7.30$ 	 & 	 $7.61\pm 0.07$		 \\
VAE $(\beta = 4)$		         & \textbf{15/15}		 & \textbf{15/15}		 & \textbf{15/15}	 & $\boldsymbol{1.22		\pm 0.05}$ 	 & 	 $10.60\pm 0.01$		 \\
GF-VAE $(\beta = 1)$		     & 7/15		 & \textbf{15/15}		 & --	 & $21.07		\pm 24.42$ 	 & 	 $8.48\pm 0.38$		 \\
GF-VAE $(\beta = 4)$		     & 13/15		 & 14/15		 & --	 & $8.16		\pm 17.89$ 	 & 	 $11.05\pm 0.24$		 \\
Action-GF-VAE $(\beta = 4)$		 & 13/15		 & 12/15		 & --	 & $3.14		\pm 1.74$ 	 & 	 $10.96\pm 0.49$		 \\
\midrule
\midrule
\multicolumn{6}{c}{\emph{Airplanes}} \\ 
                    & \# H. & \# W.    & \# C. & Continuity    & $-$Loglilkelihood   \\
\midrule
AE		                           & 0/15		 & 3/15		 & 3/15	 & $132.95		\pm 40.61$ 	 & 	 $11.55\pm 2.19$		 \\
VAE $(\beta=1)$		               & 0/15		 & 6/15		 & 4/15	 & $86.72		\pm 65.12$ 	 & 	 $12.61\pm 3.41$		 \\
VAE $(\beta = 4)$		           & 5/15		 & 9/15		 & \textbf{5/15}	 & $67.88		\pm 59.20$ 	 & 	 $12.53\pm 1.77$		 \\
GF-VAE $(\beta = 1)$		       & 0/15		 & 11/15		 & --	 & $63.94		\pm 34.43$ 	 & 	 $\boldsymbol{10.88\pm 1.24}$		 \\
GF-VAE $(\beta = 4)$		       & 7/15		 & \textbf{13/15}		 & --	 & $\boldsymbol{27.70		\pm 29.23}$ 	 & 	 $13.17\pm 1.64$		 \\
Action-GF-VAE $(\beta = 4)$		   & \textbf{9/15}		 & 12/15		 & --	 & $33.26		\pm 40.03$ 	 & 	 $12.27\pm 1.79$		 \\
\midrule
\end{tabular}
\vspace{0.25\baselineskip}
\vspace{0.25\baselineskip}
\label{app:tab:image-metrics}
\end{table}

\section{Additional Figures}

\begin{figure}[!h]
\centering
\vspace*{-1.0ex}
\includegraphics[width=\textwidth]{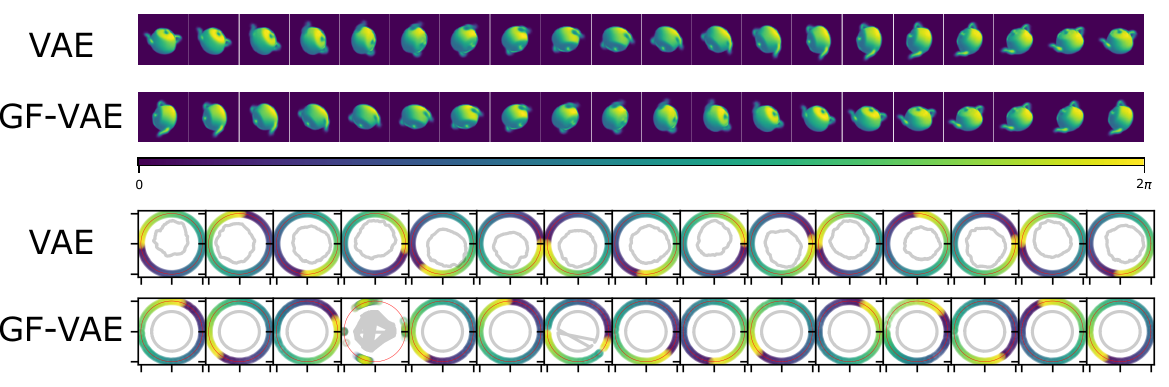}
\vspace{1.25ex}
\includegraphics[width=\textwidth]{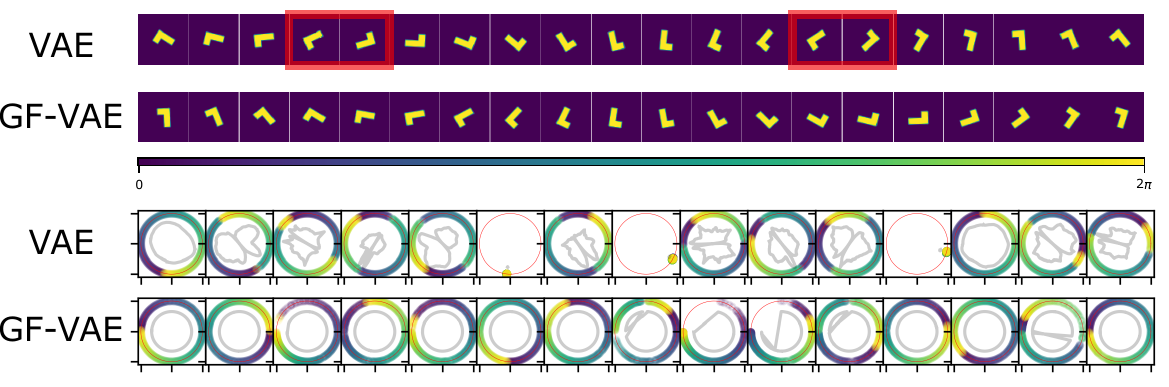}
\vspace*{-1.45ex}
\caption{\emph{Top}: Latent traversals in the decoder for a VAE and GF-VAE after training with $\beta=4$. \emph{Bottom:} The representations in latent space $\gZ$ for VAEs and GF-VAEs initialized with 15 different random seeds. For VAEs, the gray line inside the circle show the (scaled-down) $y$-traversals. For GF-VAEs, we it is difficult to visualize $\gY$ space as it is high-dimensional space. Therefore, to inspect for obstructions, we show the traversal of the $z$ vectors instead. }
\vspace*{-1.25ex}
\label{app:fig:interpolation}
\end{figure}

\begin{figure}
    \centering
    \includegraphics[width=\textwidth]{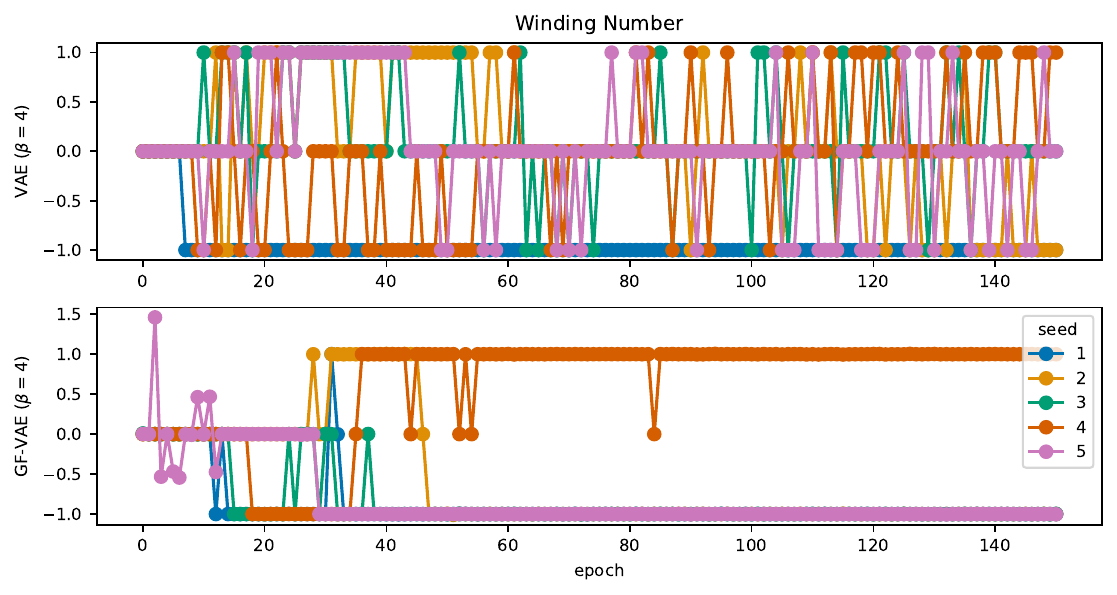}
    \caption{The winding numbers for VAE and GF-VAE trained with 5 different random seeds on the tetromino dataset as a function of epoch. }
    \label{app:fig:winding-numbers}
\end{figure}

\begin{figure}[!h]
\centering
\includegraphics[width=\textwidth]{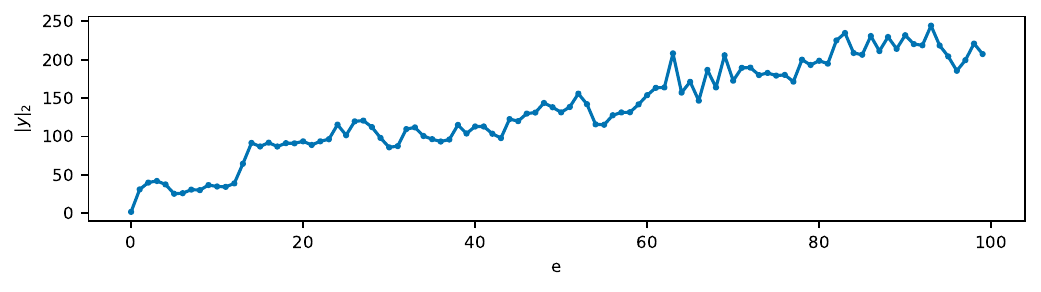}
\caption{$\|y\|_2$ as a function of epoch for a standard autoencoder trained on teapots for seed 0.}
\label{app:fig:ys-grwoth}
\end{figure}

\end{document}